\documentclass{article}

\usepackage{microtype}
\usepackage{graphicx}
\usepackage{subcaption}
\usepackage{booktabs} 

\usepackage{hyperref}

\usepackage{enumitem}
\setlist[itemize]{leftmargin=1.2em}

\usepackage[accepted]{icml2026}

\usepackage{amsmath}
\usepackage{amssymb}
\usepackage{mathtools}
\usepackage{amsthm}
\usepackage{tcolorbox}

\usepackage[capitalize,noabbrev]{cleveref}

\theoremstyle{plain}
\newtheorem{theorem}{Theorem}[section]
\newtheorem{proposition}[theorem]{Proposition}

\theoremstyle{definition}

\newtheorem{assumption}[theorem]{Assumption}
\theoremstyle{remark}

\usepackage[textsize=tiny]{todonotes}

\usepackage{color , colortbl}
\definecolor{tabhighlight}{rgb}{0.925,1,1}
\usepackage{multirow}
\usepackage{adjustbox}
\usepackage[table]{xcolor} 
\definecolor{Gray}{gray}{0.9} 

\icmltitlerunning{Omni-Perception Policy Optimization for 
  Multimodal Emotion Reasoning}

\begin{document}

\twocolumn[
  \icmltitle{Omni-Perception Policy Optimization for 
  Multimodal Emotion Reasoning}



  \icmlsetsymbol{equal}{*}

    \begin{icmlauthorlist}
      \icmlauthor{Zhiyuan Han}{ustc,sensetime,iat}
      \icmlauthor{Beier Zhu}{ustc}
      \icmlauthor{Wenwen Tong}{sensetime}
      \icmlauthor{Pengyang Shao}{nus}
      \icmlauthor{Peipei Song}{ustc}
      \icmlauthor{Xinyi Wang}{ustc}
      \icmlauthor{Jiangnan Chen}{sensetime}
      \icmlauthor{Lewei Lu}{sensetime}
      \icmlauthor{Xun Yang}{ustc}
    \end{icmlauthorlist}
    
    \icmlaffiliation{ustc}{University of Science and Technology of China}
    \icmlaffiliation{sensetime}{SenseTime Research}
    \icmlaffiliation{nus}{National University of Singapore}
    \icmlaffiliation{iat}{Institute of Artificial Intelligence, Hefei Comprehensive National Science Center}
    
    \icmlcorrespondingauthor{Xun Yang}{xyang21@ustc.edu.cn}
    \icmlcorrespondingauthor{Beier Zhu}{beier.zhu@ustc.edu.cn}

  \icmlkeywords{Machine Learning, ICML}

  \vskip 0.3in
]



\printAffiliationsAndNotice{}  

\newcommand{\zbe}[1]{{\color{blue}#1}}

\def\eg{\emph{e.g.}} 
\def\Eg{\emph{E.g}}
\def\ie{\emph{i.e.}} 
\def\Ie{\emph{I.e}}
\def\cf{\emph{cf} } 
\def\Cf{\emph{Cf}}
\def\etc{\emph{etc}} 
\def\vs{\emph{vs}}
\def\wrt{w.r.t. } 
\def\dof{d.o.f}
\def\iid{i.i.d} 
\def\wolog{w.l.o.g}
\def\etal{\emph{et al}}

\def\ourbench{\texttt{MEP-Bench}} 
\def\ourmethod{\texttt{OPPO}}
\def\ourreward{\text{Omni-Perception Reward}}
\def\ourloss{\text{Omni-Perception Loss}}

\begin{abstract}
We find that current emotion-oriented Omni-MLLMs still lack \emph{reliable omni-modal perception}: they (i) underutilize multimodal cues in their reasoning trajectories and (ii) exhibit unfaithful behavior, often hallucinating modality-specific statements from other modalities.  Building on these insights, we propose \ourmethod~(\texttt{O}mni-\texttt{P}erception \texttt{P}olicy \texttt{O}ptimization), a reinforcement learning framework that explicitly optimizes multimodal perception. First, an Omni-Perception Reward decomposes ground-truth reasoning into fine-grained visual, acoustic, and emotion cues and rewards trajectories that semantically recover these cues. Second, an Omni-Perception Loss compares the policy under full and unimodally masked inputs, applying a KL penalty only to modality-specific evidence tokens to suppress cross-modal hallucination. We further introduce \ourbench, a diagnostic benchmark that quantifies \emph{utilization} and \emph{faithfulness}.
Experiments show that \ourmethod~achieves state-of-the-art performance on MER-UniBench and MME-Emotion, while substantially improving utilization and faithfulness scores on \ourbench, highlighting the importance of sufficient and faithful omni perception for multimodal emotion reasoning. Code is available at~ \url{https://github.com/ZhiyuanHan-Aaron/OPPO}.
\end{abstract}

\section{Introduction}
\label{introduction}

Recent advances in Omni-MLLMs such as GPT-4o~\cite{hurst2024gpt} and Qwen-Omni~\cite{xu2025qwen2, Qwen3-Omni} are driving a paradigm shift in multimodal emotion recognition. Traditionally, affective computing~\cite{lian2023mer,lian2024merbench} has mainly focused on predicting emotion labels from multimodal signals. In contrast, by seamlessly integrating {visual}, {acoustic}, and {linguistic} streams, Omni-MLLMs enable \emph{Multimodal Emotion Reasoning} (MER)~\cite{lian2023explainable, lian2025affectgpt}, where models produce emotion predictions together with explicit reasoning grounded in multimodal cues. This shift not only improves the interpretability of emotion decisions, but also enhances prediction reliability via step-by-step deliberation.

\begin{figure*}[t]
\centering
\includegraphics[width=\linewidth]{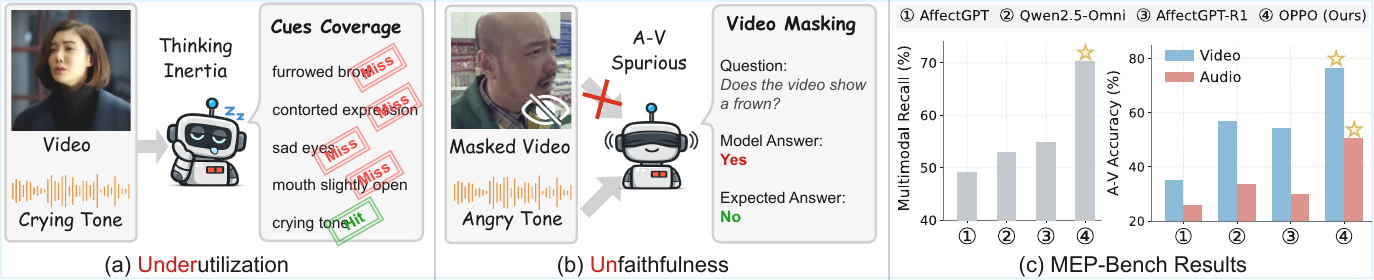}
\captionsetup{skip=6pt} 
\caption{
\textbf{Existing emotion Omni-MLLMs lack reliable perception of visual and acoustic cues. }
(a) Underutilization (violating the \emph{Utilization} principle). 
An empirical case from \ourbench\ where AffectGPT-R1~\cite{lian2025affectgptr1} overlooks fine-grained visual cues (\eg, furrowed brow, sad eyes) and mainly relies on the crying tone in the audio due to ``thinking inertia''. 
(b) Unfaithfulness (violating the \emph{Faithfulness} principle). 
An empirical case where AffectGPT-R1 hallucinates a ``frown'' in the video based on an angry tone and spurious audio–visual correlations, even when the video is masked. 
(c) Left: multimodal recall on \ourbench; baseline models capture only about half of the human-annotated cues, whereas our \ourmethod~achieves substantially higher recall. 
Right: accuracy of unimodal-masking probes on \ourbench; our \ourmethod~attains much higher faithfulness on both video and audio masked questions.
}
\label{fig:fig1}
\vspace{-2mm}
\end{figure*}

While pure text reasoning has been extensively reinforced by methods such as GRPO and PAPO~\cite{guo2025deepseek,yu2025dapo}, MER further requires \textbf{reliable perception} of visual and acoustic signals—an ability that remains largely underexplored.
 We argue that reliable omni-modal perception in MER hinges on two principles:
    \textbf{(1) Utilization}: the reasoning trajectory should sufficiently leverage visual and acoustic cues. For example, a model that predicts \emph{angry} just because the voice is loud, while ignoring a neutral facial expression and non-aggressive speech, violates this principle.
    \textbf{(2) Faithfulness}: modality-specific statements in the reasoning should be causally attributable to, and directly verifiable from, their source modality, rather than extrapolated from other modalities or spurious audio–video correlations. For instance, a model that asserts ``the video shows a frown'' when the face  is actually neutral, simply because the voice sounds sad, violates this principle, as the visual claim is hallucinated from audio context.

To investigate how well existing emotion Omni-MLLMs satisfy these two principles, 
 we construct the \texttt{M}ultimodal \texttt{E}motion \texttt{P}erception \texttt{Bench}mark (dubbed as \ourbench). Specifically, to quantify \textit{utilization}, \ourbench~computes the recall of human-annotated multimodal cues within the generated reasoning trajectories. To quantify \textit{faithfulness}, \ourbench~adopts a POPE-style evaluation~\cite{li2023evaluating} under unimodal masking (\eg, masking the video and asking, ``Does the video show a frown?''). This evaluates whether visual statements are causally attributable to the video input rather than inferred from audio context or spurious audio–video correlations.

In~\cref{fig:fig1}, we present both qualitative and quantitative results on \ourbench, showing that existing emotion Omni-MLLMs often \textit{violate both principles}. For \textit{utilization}, \cref{fig:fig1}~(a) shows that the video and audio jointly convey multiple emotion-related cues, yet the baseline~\cite{lian2025affectgptr1} only picks up the ``crying tone'' in its reasoning chain. This {under-utilization} problem is further confirmed by the recall of multimodal cues (\cref{fig:fig1}~(c)), where current models achieve only around 50\%, showing that they miss roughly half of the human-perceived cues; For \textit{faithfulness}, \cref{fig:fig1}~(b) shows that, even when the video is masked and we ask ``Does the video show a frown?'', the baseline still answers ``\emph{Yes}'' by appealing to the angry tone in the audio. Quantitatively, \cref{fig:fig1}~(c) shows that such probes yield only 35.0–54.2\% accuracy on video questions and 26.0–30.0\% on audio questions for existing models.

To bridge this gap, we propose Emotional \texttt{O}mni-\texttt{P}erception \texttt{P}olicy \texttt{O}ptimization (\ourmethod), a reinforcement learning framework (RL) that enforces grounded reasoning. First, to encourage better \textit{utilization} of multimodal cues, we introduce \textbf{\ourreward}. Specifically, we decompose the ground-truth reasoning into fine-grained visual, acoustic, and emotion cues, and segment the model’s generated thinking into a sequence of clauses. We embed both cues and clauses and compute semantic similarities to determine whether each cue is covered by at least one clause. The Omni-Perception Reward then increases with the coverage of visual, audio, and emotion cues, encouraging the model to explicitly recover human-annotated evidence in its reasoning.
Second, to enforce better \textit{faithfulness}, we introduce \textbf{\ourloss}. Specifically, we construct unimodally masked counterfactual inputs (\eg, partially removing the video or audio) and compare the policy’s behavior under the full and masked inputs. Unlike prior work such as PAPO~\cite{wang2025perception}, which encourages all tokens to change under masking, we use cue matching to identify tokens that describe visual or acoustic evidence and apply a KL-based penalty only to these modality-specific tokens, encouraging their distributions to change substantially when the corresponding modality is masked. This forces statements about a modality to truly depend on its input and helps suppress cross-modal hallucination.

Extensive experiments show that \ourmethod~achieves SoTA performance on the comprehensive MER-UniBench~\cite{lian2025affectgpt} and MME-Emotion~\cite{zhang2025mme}, while yielding consistent gains on the \ourbench. These results highlight the importance of sufficient and faithful omni perception for multimodal emotion reasoning. Our contributions are three-fold:
\begin{itemize}
    \item \textbf{Findings:} We construct  \ourbench~and reveal that existing emotion Omni-MLLMs (1) underutilize multimodal cues in their reasoning trajectories and  (2) exhibit unfaithful behavior under unimodal masking, often due to spurious cross-modal correlations.
    \item \textbf{Methodology:} We propose \ourmethod, an RL framework that combines an explicit \ourreward~for cue coverage with a mask-based \ourloss~applied to modality-specific evidence tokens, encouraging cue-grounded reasoning and suppressing hallucinated modality evidence.
    \item \textbf{Performance:} We achieve SoTA performance on multiple emotion benchmarks and substantially improve perception-oriented diagnostics, validating the effectiveness of \ourmethod~ for multimodal emotion reasoning.
\end{itemize}

\section{Related Work}
\label{sec:relatedwork}

\noindent \textbf{Multimodal emotion reasoning.} Recent multimodal large language models~\cite{wang2026affordbot,wang2025augrefer} have shown growing potential for emotion-related understanding~\cite{xu2026social,dai2025psyche,zhang2026stimuli,song2024emotional,ye2025multi}, driving multimodal emotion recognition to shift from closed-set label prediction~\cite{lian2023mer, jiang2020dfew, liu2022mafw} toward open-vocabulary Multimodal Emotion Reasoning (MER)~\cite{lian2023explainable, lian2025affectgpt}. Early MER efforts~\cite{lian2023explainable, cheng2024emotion, han2025benchmarking, lian2025affectgpt} primarily treated explanation generation and emotion prediction as a multi-task learning problem based on supervised fine-tuning. Inspired by the DeepSeek-R1 paradigm~\cite{guo2025deepseek}, recent advances~\cite{lian2025affectgptr1, yang2025humanomniv2, zhao2025r1} have unified these tasks by treating reasoning as an explicit Chain-of-Thought (CoT) process. Despite this unification, these models still lack \textit{reliable omni-modal perception}. Notably, AffectGPT-R1~\cite{lian2025affectgptr1} found that naively incorporating perception-related rewards could even degrade final prediction accuracy. This highlights \ourmethod~ as an essential solution, demonstrating that sufficient and faithful omni-perception is the key to unlocking robust reasoning and superior performance in multimodal emotion tasks.

\noindent \textbf{Reinforcement learning for MLLMs.} Reinforcement learning (RL) has been widely adopted to enhance the reasoning capabilities of MLLMs~\cite{guo2025deepseek, yu2025dapo, liu2025visual,Zhao_2025_ICCV,11515172,zhao2026realtime}. However, ensuring reliable perception within reasoning trajectories remains an open challenge. Current frameworks have explored LLM-as-a-judge for quality assessment, such as HumanOmni-V2~\cite{yang2025humanomniv2}, AffectGPT-R1~\cite{lian2025affectgptr1}, and Perception-R1~\cite{xiao2025advancing}, or employed unimodal masking with global penalties as in PAPO~\cite{wang2025perception}. These approaches face inherent limitations: coarse-grained rewards often fail to ensure the utilization of specific multimodal cues, while the indiscriminate global penalties in PAPO can destabilize optimization and allow models to bypass genuine grounding. To address these technical gaps, we propose \ourmethod, which explicitly optimizes for two core principles: \textit{utilization} and \textit{faithfulness}. By integrating a fine-grained \ourreward~ for exhaustive cue coverage with a targeted \ourloss~ exclusively for modality-specific tokens, \ourmethod\ enforces the precise alignment between perceived evidence and reasoning, leading to more reliable multimodal reasoning.
\section{Method}
\label{sec:method}

\begin{figure*}[t]
\centering
\includegraphics[width=\linewidth]{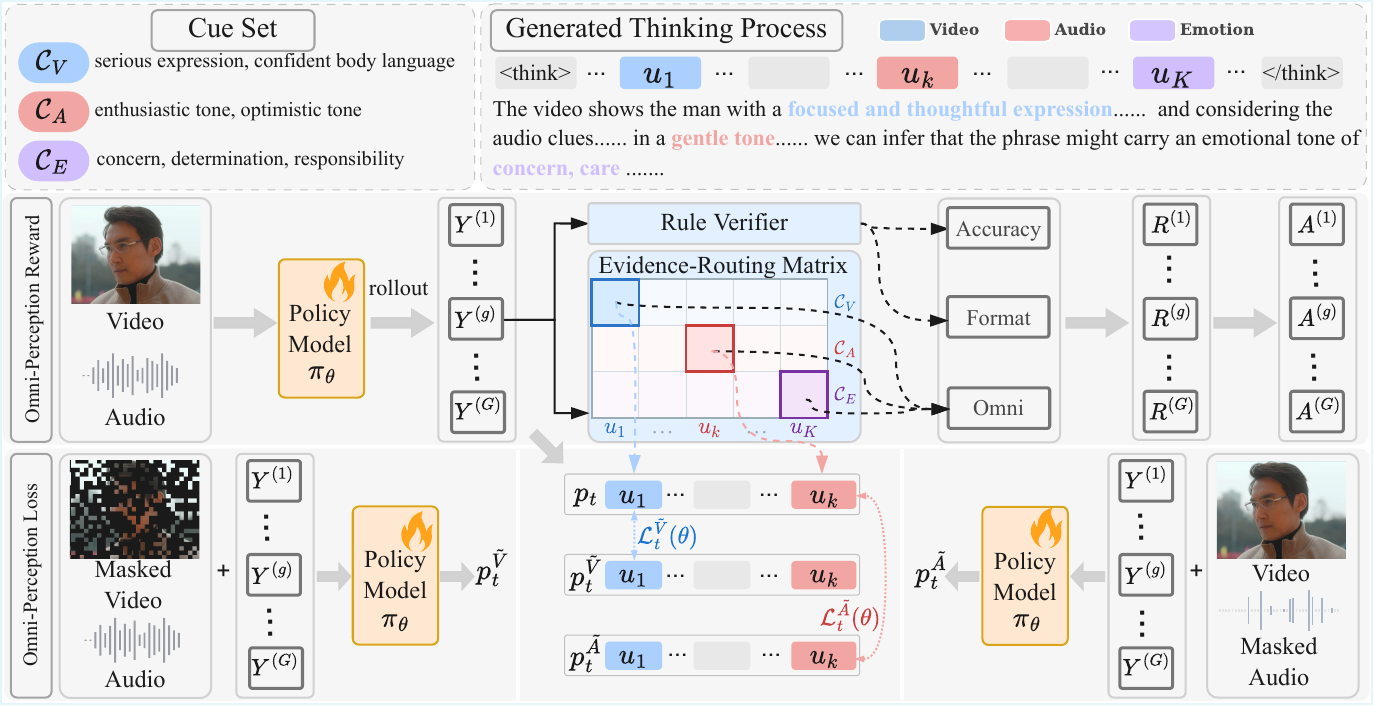}
\captionsetup{skip=6pt} 
\caption{ 
The framework of \ourmethod.
Top: Fine-grained evidence is organized into video cues $\mathcal{C}_V$, audio cues $\mathcal{C}_A$, and emotion cues $\mathcal{C}_E$. CoT is segmented into clauses $\mathcal{U}$, which are identified as video, audio, or emotion types based on their semantic content.
Middle: The Omni-Perception Reward (\ref{sec:reward}). An Evidence-Routing Matrix computes the semantic similarities between generated clauses and ground-truth cues. This encourages the model to explicitly cover diverse multimodal evidence during reasoning.
Bottom: The Omni-Perception Loss (\ref{sec:loss}). We construct unimodal masked inputs and maximize the KL divergence between the full and masked distributions on modality-specific tokens $u_k$. This enforces faithfulness by penalizing the model for hallucinating information from missing modalities.
}
\label{fig:fig2}
\vspace{-2mm}
\end{figure*}

We formulate Multimodal Emotion Reasoning (MER) as an open-vocabulary generation task. Given a multimodal input $X = \{V, A, T\}$, where $V$, $A$, and $T$ represent the visual, acoustic, and textual streams, respectively, the model $\pi_\theta$ generates a structured sequence $Y$. To explicitly capture the reasoning process, the output is constrained to include a reasoning chain wrapped in \texttt{<think>...</think>} tags, followed by the final prediction enclosed in \texttt{<answer>...</answer>} tags.

Our pipeline starts with standard supervised fine-tuning (SFT) and is followed by a GRPO-style~\cite{guo2025deepseek} reinforcement learning (RL) phase. We randomly select 5k samples from the MER-Caption+~\cite{lian2025affectgpt} for SFT and use the remaining samples for RL. During RL, we follow \citet{lian2025affectgpt}, adopting the same emotion-wheel metric (details in Appendix \ref{sec:task-reward}) to compute the F1 score as the task reward $R_\mathsf{acc}$, together with a format reward $R_\mathsf{fmt}$ to ensure structurally valid outputs (details in Appendix \ref{sec:format-reward}).

To address the challenges of underutilization and unfaithfulness in multimodal perception, we augment the RL stage with \texttt{O}mni-\texttt{P}erception \texttt{P}olicy \texttt{O}ptimization (\ourmethod). Our \ourmethod~introduces an explicit Omni-Perception Reward (~\cref{sec:reward}) that encourages better utilization of multimodal cues, together with an Omni-Perception Loss (~\cref{sec:loss}) that promotes faithful unimodal perception and discourages cross-modal hallucination.

\subsection{Omni-Perception Reward}
\label{sec:reward}
To encourage better utilization of multimodal cues in the reasoning process, we design an Omni-Perception Reward that measures how well the model’s intermediate thoughts recover the ground-truth evidence.

\noindent\textbf{Multimodal evidence extraction.}
For each training sample from the MER-Caption+ dataset~\cite{lian2025affectgpt}, we derive a set of atomic evidence. 
Specifically, we leverage GPT-5~\cite{singh2025openai} to extract fine-grained evidentiary statements explicitly present in each ground-truth reasoning paragraph, without introducing new evidence.
Each statement is then categorized into one of three types, forming $\mathcal{C} =\{ \mathcal{C}_V \cup \mathcal{C}_A \cup \mathcal{C}_E\}$:
\begin{itemize}
    \item \textbf{Video cues $\mathcal{C}_V$}: visual behaviors such as facial expressions and body postures. 
    \item \textbf{Audio cues $\mathcal{C}_A$}: paralinguistic descriptions such as intonation, rhythm, and volume. 
    \item \textbf{Emotion cues $\mathcal{C}_E$}: inferential statements that explicitly connect the observed cues to the final emotion prediction.
\end{itemize}
For each ground-truth cue $c \in \mathcal{C}$, we obtain its embedding $\mathbf{e}(c)$ with Qwen3-Embedding~\cite{zhang2025qwen3}. Similarly, for each model generated output, we segment its intermediate thinking process into a sequence of  clauses $\mathcal{U} = \{u_1,\dots,u_K\}$ using punctuation-based splitting, and compute the embedding $\mathbf{e}(u_k)$ for each clause.

\noindent\textbf{Reward based on evidence coverage.}
For each ground-truth cue $c$, we measure whether it is \emph{covered by at least one clause} in the model’s reasoning. Concretely, we first build an Evidence-Routing Matrix $\mathbf{M}\in\mathbb{R}^{|\mathcal{C}|\times K}$, where each entry
\begin{equation}\label{eq:M}
    \mathbf{M}_{c,k} = \cos\bigl(\mathbf{e}(c), \mathbf{e}(u_k)\bigr)
\end{equation}
is the cosine similarity between a ground-truth cue $c$ and a generated clause $u_k$.
We then match each cue to its best supporting clause by taking the maximum similarity:
\begin{equation}
M(c)=\max_{k\in[K]}\mathbf{M}_{c,k}.
\end{equation}
Finally, we convert this similarity into a normalized hit score
\begin{equation}
\label{eq:cue_score}
s(c)=\tfrac{1}{1-\delta}\max\bigl({M}(c)-\delta,0\bigr),
\end{equation}
where $\delta$ is a similarity threshold: if ${M}(c)\le\delta$, the cue is treated as \emph{unmatched} ($s(c)=0$), and if ${M}(c)>\delta$, we linearly map the range $[\delta,1]$ to $[0,1]$ using the factor $\tfrac{1}{1-\delta}$. This way, only cues with sufficiently strong semantic matches contribute positively to the evidence coverage.

The omni-perception reward aggregates cue-level scores by averaging over the three cue sets:
\begin{equation}
R_\mathsf{omni}
=\sum_{\mathcal{C}'\in \{\mathcal{C}_V, \mathcal{C}_A, \mathcal{C}_E\}} 
\frac{1}{|\mathcal{C}'|}\sum_{c\in \mathcal{C}'} s(c),
\end{equation}
so that video, audio, and emotion cues are each rewarded according to the fraction of annotated cues that are covered.
The overall reward used by GRPO augments the task and format rewards with this omni-perception term:
\begin{equation}
\label{eq:rtotal}
R_\mathsf{total}
= R_\mathsf{acc} + R_\mathsf{fmt} + \alpha\, R_\mathsf{omni},
\end{equation}
where $\alpha>0$ controls the trade-off between end-task accuracy and cue-grounded reasoning.

\subsection{Omni-Perception Loss}
\label{sec:loss}
To enforce faithfulness, we introduce Omni-Perception Loss, which contrasts the policy’s behavior under the full input and a unimodal-masked counterfactual input on tokens corresponding to the masked modality.  
In other words, when a modality is removed, the model is encouraged to retract or weaken claims about that modality rather than hallucinating them from the remaining context. 

\noindent\textbf{Unimodal masking and token-level discrepancy.}
Given $X=\{V,A,T\}$, we construct  counterfactual inputs by randomly masking a ratio $\rho$ of one modality to zero:
\begin{equation}
    X^{\tilde{V}}=\{\tilde{V},A,T\}, \quad X^{\tilde{A}}=\{V,\tilde{A},T\},
\end{equation}
where $\tilde{V}$ and $\tilde{A}$ denote the masked visual and acoustic streams, respectively.
For each decoding step $t$, we write $p_t = \pi_\theta(\cdot \mid X, Y_{<t})$ for the token distribution under the full input, and denote
\begin{equation}
    p_t^{m} = \pi_\theta(\cdot \mid X^{ m}, Y_{<t}), \quad m \in \{\tilde{V},\tilde{A}\},
\end{equation}
the token distribution when modality $m$ is masked.
We quantify the discrepancy between the full-input and masked-input distributions at step $t$ using the KL divergence:
\begin{equation}
\label{eq:kl_token}
\mathcal{L}_t^{m}(\theta) \;=\; \mathbb{D}_{\mathrm{KL}}\!\left(p_t \,\|\, p_t^{m}\right),\quad 
m \in \{\tilde{V},\tilde{A}\}.
\end{equation}
Intuitively, if $Y_t$ expresses modality-specific evidence of $m$, masking modality $m$ should substantially change the token distribution, yielding a large $\mathcal{L}_t^{m}(\theta)$. Conversely, if the distribution remains similar after masking $m$, the model is likely relying on spurious cross-modal correlations to hallucinate content about the missing modality. 


\noindent\textbf{Modality-specific token identification.}
To identify which tokens should receive KL supervision, we match each clause against the omni-modal cue set $\mathcal{C}_O = \mathcal{C}_V \cup \mathcal{C}_A$.
Reusing the Evidence-Routing Matrix $\mathbf{M}$ from Eq.~\eqref{eq:M}, we match each clause $u_k$ to its best supporting cue in $\mathcal{C}_O$ by
\begin{equation}
\label{eq:clause_match}
c^*(u_k)=\arg\max_{c\in\mathcal{C}_O}\mathbf{M}_{c,k}, 
\qquad
{M}(u_k)=\max_{c\in\mathcal{C}_O}\mathbf{M}_{c,k}.
\end{equation}
We retain clause $u_k$ only when ${M}(u_k)>\delta$. For each retained clause, we assign it to a modality according to the type of its best-matching cue $c^*(u_k)$: clauses matched to $\mathcal{C}_V$ contribute their tokens to the visual set $\mathcal{T}_V$, whereas those matched to $\mathcal{C}_A$ contribute to the audio set $\mathcal{T}_A$.

\noindent\textbf{Overall objective.}
Our Omni-Perception Loss encourages large KL divergence on the modality-specific token sets $\mathcal{T}_V$ and $\mathcal{T}_A$ identified above:
\begin{equation}
    \mathcal{J}_\mathsf{omni}(\theta)=\sum_{t\in \{\mathcal{T}_V,\mathcal{T}_A\}}\mathcal{L}_t^{m}(\theta)
\end{equation}

Finally, we optimize GRPO with the overall reward $R_\mathsf{total}$ in Eq.~\eqref{eq:rtotal} and \ourloss:
\begin{equation}
\label{eq:loss_total}
\mathcal{J}_\mathsf{total}(\theta)
=\mathcal{J}_{\mathsf{GRPO}}(\theta)+\beta \mathcal{J}_\mathsf{omni}(\theta) ,
\end{equation}
where $\beta>0$ controls the strength of regularization.

\begin{table}[t]
\centering

\fontsize{8.0}{9.5}\selectfont 
\renewcommand{\arraystretch}{1.25}
\setlength{\tabcolsep}{6.0pt} 

\caption{\textbf{Diagnostic Analysis on \ourbench.} We report \textbf{Utilization} (Evidence Recall) and \textbf{Faithfulness} (Perception Accuracy under masking). Note: \textsc{HumanOmni-V2}$^{\ddagger}$ serves as an \textit{Oracle} (upper bound) recall reference.}
\label{tab:diagnostic_main}

\begin{tabular}{l c c c}
    \toprule

    \multirow{2}{*}{\textbf{Model}} 
    & \textbf{Utilization} 
    & \multicolumn{2}{c}{\textbf{Faithfulness}} \\
    \cmidrule(lr){2-2} \cmidrule(lr){3-4}
    & \textbf{Recall} ($\uparrow$) 
    & \textbf{Acc@V} ($\uparrow$) 
    & \textbf{Acc@A} ($\uparrow$) \\
    \midrule

    \multicolumn{4}{c}{\textit{\textbf{SFT Baselines}}} \\
    \midrule
    AffectGPT  & 49.14  & 35.00 & 26.00 \\
    Qwen2.5-Omni & 53.00 & 57.00 & 33.80 \\
    \midrule

    \multicolumn{4}{c}{\textit{\textbf{RL Baselines \& Oracle$^{\ddagger}$}}} \\
    \midrule
    AffectGPT-R1   & 54.85 & 54.20 & 30.00 \\
    HumanOmni-V2$^{\ddagger}$ & 80.17$^{\ddagger}$ & 48.40 & 37.60 \\
    \midrule

    \multicolumn{4}{c}{\textit{\textbf{Ours}}} \\
    \midrule
    \textit{Baseline} & 58.00 & 59.20 & 33.00 \\
    \rowcolor{tabhighlight} 
    \textbf{\ourmethod} & \textbf{70.44} & \textbf{76.40} & \textbf{50.60} \\
    \bottomrule
    
\end{tabular}
\vspace{-5mm}
\end{table}

\subsection{Comparison to Perception Enhancement Methods}
We review prior methods that aim to improve perception in multimodal emotion reasoning and analyze their limitations.
In Sec.~\ref{sec:experiments}, we further present empirical evidence consistently showing that these methods are less effective than our \ourmethod.

(1) \textit{AffectGPT-R1~\cite{lian2025affectgptr1} and HumanOmni-V2~\cite{yang2025humanomniv2}:}
These two models adopt an LLM-as-a-judge framework, where an LLM evaluates whether intermediate thoughts make better use of multimodal cues. Concretely, given a pair of reasoning chains, the LLM is asked to assign a higher reward to the one it judges as better.
However, this design has two key limitations. First, the judgment is coarse-grained and heavily dependent on the quality of the judging LLM. In contrast, our Omni-Perception Reward explicitly extracts fine-grained evidence cues and encourages each cue to be grounded in the model’s output; as reported in AffectGPT-R1, LLM-as-a-judge constraints can even disrupt reasoning, leading to lower cue recall and more grounding failures. Second, this reward cannot resolve the unfaithfulness issue, because it cannot distinguish whether a ``better'' reasoning chain is grounded in modality-specific cues or merely produced by hallucinating plausible-sounding explanations.

(2) \textit{PAPO~\cite{wang2025perception}:}
PAPO aims to enhance perception by partially masking one modality and enforcing a KL divergence between the model’s output distributions under the full and masked inputs. 
However, this design has two limitations. First, PAPO does not distinguish between different types of output tokens: the KL constraint is applied uniformly to every token, including those that are unrelated to the masked modality. As a result, many tokens are unnecessarily pushed away from their full-input distributions, making optimization unstable and highly sensitive to hyperparameters. In contrast, our approach applies KL only to modality-specific evidence tokens identified via cue matching, resulting in more targeted supervision and substantially more stable optimization.
Second, PAPO cannot reliably address cross-modal hallucination. Since the KL constraint is applied to \emph{all} tokens, the model can satisfy it by globally perturbing its output distribution under masking, including tokens unrelated to the masked modality. Consequently, changes observed on modality-specific tokens are entangled with these global shifts and no longer indicate whether the statements truly depend on the masked modality.

\section{Experiments}
\label{sec:experiments}

\subsection{Setup}
\label{sec:dataset_setting}

\begin{table*}[t]
\centering

\fontsize{8.0}{10}\selectfont 
\renewcommand{\arraystretch}{1.25}
\setlength{\tabcolsep}{6.0pt}

\caption{\textbf{Main results on MER-UniBench.} This table presents a comprehensive performance comparison across sentiment analysis, basic
emotion recognition, and fine-grained emotion recognition tasks. \ourmethod~ achieves state-of-the-art performance.}
\label{tab:main}

\begin{tabular}{l @{\hspace{4pt}} cccc @{\hspace{4pt}} cccc @{\hspace{4pt}} c @{\hspace{4pt}} c}
    \toprule
    
    \multirow{2}{*}{\textbf{Model}} 
    & \multicolumn{4}{c}{\textbf{Sentiment Analysis}} 
    & \multicolumn{4}{c}{\textbf{Basic Emotion}} 
    & \textbf{Fine} 
    & \multirow{2}{*}{\textbf{Mean}} \\
    \cmidrule(lr){2-5} \cmidrule(lr){6-9} \cmidrule(lr){10-10}
    & MOSI & MOSEI & SIMS & SIMSv2 
    & MER23 & MER24 & MELD & IEMOCAP 
    & OV-MERD+ & \\
    \midrule

    Qwen-Audio~\cite{chu2023qwen}   & 70.09 & 46.90 & 70.73 & 65.26 & 41.85 & 31.61 & 49.09 & 35.47 & 32.36 & 49.26 \\
    SALMONN~\cite{tang2023salmonn}  & 81.00 & 67.03 & 68.69 & 65.93 & 55.53 & 45.38 & 45.62 & 46.84 & 45.00 & 57.89 \\

    VideoChat2~\cite{li2024mvbench} & 66.84 & 54.32 & 69.49 & 70.66 & 33.67 & 54.50 & 36.64 & 48.70 & 39.21 & 52.67 \\
    LLaMA-VID~\cite{li2024llama}    & 61.78 & 63.89 & 69.35 & 67.48 & 50.72 & 57.60 & 42.75 & 46.02 & 45.01 & 56.07 \\
    Chat-UniVi~\cite{jin2024chat}   & 54.53 & 63.18 & 68.15 & 66.36 & 57.62 & 65.67 & 45.61 & 52.37 & 48.00 & 57.94 \\
    mPLUG-Owl~\cite{ye2023mplug}    & 72.40 & 72.91 & 72.13 & 75.00 & 56.86 & 59.89 & 49.11 & 55.54 & 48.18 & 62.45 \\

    PandaGPT~\cite{su2023pandagpt}  & 61.92 & 67.61 & 68.38 & 67.23 & 40.21 & 51.89 & 37.88 & 44.04 & 37.12 & 52.92 \\
    R1-Omni~\cite{zhao2025r1}       & 58.02 & 56.48 & 71.82 & 68.58 & 64.17 & 67.43 & 43.20 & 51.58 & 55.24 & 59.61 \\
    Emotion-LLaMA~\cite{cheng2024emotion} & 66.13 & 67.66 & 78.32 & 77.23 & 59.38 & 73.62 & 46.76 & 55.47 & 52.97 & 64.17 \\

    AffectGPT~\cite{lian2025affectgpt} & 81.30 & 80.90 & \textbf{88.49} & 86.18 & 78.54 & 78.80 & 55.65 & 60.54 & 62.52 & 74.77 \\
    AffectGPT-R1~\cite{lian2025affectgptr1} & 79.65 & 80.18 & 87.26 & 85.75 & 84.51 & \textbf{93.13} & \textbf{66.71} & \textbf{74.26} & \textbf{68.39} & 79.98 \\

    AffectGPT (Last Checkpoint) & 78.30 & 78.51 & 85.28 & 85.31 & 73.69 & 76.90 & 52.09 & 57.74 & 59.89 & 71.96 \\
    AffectGPT-R1 (Reproduce) & 80.29 & 80.64 & 85.70 & 83.75 & 81.88 & 80.89 & 57.53 & 65.71 & 64.08 & 75.60 \\

    \rowcolor{tabhighlight} 
    \textbf{\ourmethod} & \textbf{86.50} & \textbf{84.63} & 86.22 & \textbf{88.26} & \textbf{87.73} & 90.34 & 64.06 & 73.60 & 67.16 & \textbf{81.05} \\
    \bottomrule
\end{tabular}
\end{table*}

\begin{table*}[t]
\centering

\fontsize{8.0}{10}\selectfont
\renewcommand{\arraystretch}{1.25}

\setlength{\tabcolsep}{3.4pt}

\caption{\textbf{Main results on MME-Emotion.} We report task-level CoT scores and overall recognition, reasoning, and CoT averages. \ourmethod~achieves the best overall performance. We use \texttt{gemini-3.1-flash-lite-preview} as judge since GPT-4o is not available.}
\label{tab:mme_emotion_main}

\begin{tabular}{l  @{\hspace{4pt}} c c c c c c *{2} {>{\centering\arraybackslash}m{0.8cm}}@{} @{\hspace{8pt}}  *{3}{>{\centering\arraybackslash}m{0.8cm}}}
    \hline

    \multirow{2}{*}{\textbf{Methods}}
    & \multirow{2}{*}{\textbf{ER-Lab}}
    & \multirow{2}{*}{\textbf{ER-Wild}}
    & \multirow{2}{*}{\textbf{FG-ER}}
    & \multirow{2}{*}{\textbf{FG-SA}}
    & \multirow{2}{*}{\textbf{ML-ER}}
    & \multirow{2}{*}{\textbf{Noise-ER}}
    & \multirow{2}{*}{\textbf{IR}}
    & \multirow{2}{*}{\textbf{SA}}
    & \multicolumn{3}{c}{\textbf{Mean}} \\
    \cline{10-12}
    &  &  &  &  &  &  &  &  & \textbf{Rec.} & \textbf{Rea.} & \textbf{CoT} \\
    \hline

    PandaGPT~\cite{su2023pandagpt}
    & 28.5 & 23.2 & 25.2 & 41.4 & 28.9 & 26.3 & 29.2 & 37.7 & 21.6 & 38.4 & 30.0 \\

    Emotion-LLaMA~\cite{cheng2024emotion}
    & 31.7 & 19.7 & 31.7 & 36.4 & 29.5 & 49.3 & 27.3 & 40.7 & 21.1 & 42.4 & 31.8 \\

    AffectGPT~\cite{lian2025affectgpt}
    & 35.0 & 32.8 & 32.6 & \textbf{44.2} & 31.3 & 50.7 & 30.0 & 47.9 & 19.7 & 57.4 & 38.6 \\

    \textit{Baseline}
    & 47.0 & 40.8 & 35.0 & 35.6 & 37.7 & 59.0 & 32.4 & 58.7 & 27.9 & 62.7 & 45.3 \\

    \rowcolor{tabhighlight}
    \textbf{\ourmethod}
    & \textbf{54.6} & \textbf{45.5} & \textbf{42.0} & 40.7 & \textbf{39.6} & \textbf{66.2} & \textbf{34.8} & \textbf{60.6} & \textbf{31.0} & \textbf{68.1} & \textbf{49.5} \\
    \hline
\end{tabular}
\vspace{-2mm}
\end{table*}

\begin{table*}[t]
\centering

\fontsize{8}{10}\selectfont
\renewcommand{\arraystretch}{1.25}
\setlength{\tabcolsep}{4.0pt}

\caption{\textbf{Ablation Study on MER-UniBench and \ourbench.} Building upon the Omni-Perception Reward $R_\mathsf{omni}$, incorporating generic perception losses (\eg, PAPO) negatively impacts performance due to unguided optimization. In contrast, our Omni-Perception Loss $\mathcal{J}_\mathsf{omni}(\theta)$ yields consistent improvements across all metrics.}
\label{tab:ablation_main}

\begin{tabular}{l c c c c c c c c >{\hspace{-2pt}}c<{\hspace{-2pt}} c c c c}
    \toprule

    \multirow{2}{*}{\textbf{Model Variant}} 
    & \multicolumn{4}{c}{\textbf{Sentiment Analysis}} 
    & \multicolumn{4}{c}{\textbf{Basic Emotion}} 
    & \textbf{Fine} 
    & \multirow{2}{*}{\textbf{Mean}} 
    & \textbf{Util.} 
    & \multicolumn{2}{c}{\textbf{Faithfulness}} \\
    \cmidrule(lr){2-5} 
    \cmidrule(lr){6-9} 
    \cmidrule(lr){10-10} 
    \cmidrule(lr){12-12} 
    \cmidrule(lr){13-14}

    & MOSI & MOSEI & SIMS & SIMSv2 
    & MER23 & MER24 & MELD & IEMOCAP 
    & OV-MERD+ 
    & 
    & \textbf{Recall} 
    & \textbf{Acc@V} 
    & \textbf{Acc@A} \\
    \midrule

    \textit{Baseline}  
    & 83.48 & \textbf{86.31} & 87.78 & 86.84 
    & 78.16 & 82.70 & 61.45 & 67.28 
    & 66.86 & 77.87 & 58.00 & 59.20 & 33.00 \\

    + $R_\mathsf{omni}$ 
    & 84.83 & 85.04 & 86.48 & 86.95 
    & 85.40 & 85.90 & 63.37 & 70.03 
    & 67.09 & 79.45 & 62.55 & 64.80 & 39.40 \\

    \quad + PAPO 
    & 83.47 & 85.66 & 86.21 & 87.36 
    & 86.27 & 87.09 & 63.28 & 69.12 
    & 65.65 & 79.34 & 58.29 & 62.00 & 34.60 \\

    \quad + PAPO-dual 
    & 84.98 & 85.06 & \textbf{88.47} & 86.72 
    & 84.96 & 86.11 & 63.07 & 67.51 
    & 66.20 & 79.23 & 60.58 & 65.20 & 34.80 \\

    \rowcolor{tabhighlight}
    \quad \textbf{+} \boldmath{$\mathcal{J}_\mathsf{omni}(\theta)$}    
    & \textbf{86.50} & 84.63 & 86.22 & \textbf{88.26} 
    & \textbf{87.73} & \textbf{90.34} & \textbf{64.06} & \textbf{73.60} 
    & \textbf{67.16} & \textbf{81.05} & \textbf{70.44} & \textbf{76.40} & \textbf{50.60} \\
    \bottomrule

\end{tabular}
\vspace{-2mm}
\end{table*}

\noindent\textbf{Tasks and datasets.}
Following prior work~\cite{lian2025affectgptr1}, we adopt the comprehensive MER-UniBench~\cite{lian2025affectgpt} for evaluation, which aggregates nine diverse multimodal emotion datasets covering three distinct tasks: (1) Fine-grained Emotion Recognition: OV-MERD+~\cite{lian2025affectgpt}, (2) Basic Emotion Recognition: MER23~\cite{lian2023mer}, MER24~\cite{lian2024mer}, MELD~\cite{poria2019meld}, IEMOCAP~\cite{busso2008iemocap}, and (3) Sentiment Analysis: MOSI~\cite{zadeh2016mosi}, MOSEI~\cite{zadeh2018multimodal}, SIMS~\cite{yu2020ch}, SIMSv2~\cite{liu2022make}. A detailed description of task definitions is provided in Appendix \ref{appendix:tasks-and-datasets}.
We further evaluate on MME-Emotion~\cite{zhang2025mme}, which covers eight emotion-related tasks and reports recognition, reasoning, and CoT scores, providing a complementary evaluation of both answer correctness and reasoning quality.

\begin{table}[t]
\centering

\fontsize{8}{10}\selectfont
\renewcommand{\arraystretch}{1.2}
\setlength{\tabcolsep}{3.2pt}

\caption{\textbf{Impact of Omni-Perception Reward Weight $\alpha$.} As $\alpha$ increases, we observe a progressively better balance between task performance and perception metrics. Notably, setting $\alpha \ge 0.6^\dagger$ leads to model format collapse in the later training stage.}
\label{tab:ablation_alpha}

\begin{tabular}{c c c c c c c c}
    \toprule

    \multirow{2}{*}{\textbf{Weight $\alpha$}} 
    & \multicolumn{4}{c}{\textbf{Task Performance}} 
    & \textbf{Util.} 
    & \multicolumn{2}{c}{\textbf{Faithfulness}} \\
    \cmidrule(lr){2-5} 
    \cmidrule(lr){6-6} 
    \cmidrule(lr){7-8}
    
    & \textbf{Senti.} 
    & \textbf{Basic.} 
    & \textbf{Fine.} 
    & \textbf{Mean} 
    & \textbf{Recall} 
    & \textbf{Acc@V} 
    & \textbf{Acc@A} \\
    \midrule

    0.2 & 86.21 & 75.58 & 66.97 & 79.35 & 59.16 & 61.20 & 39.20 \\
    0.3 & 84.96 & 75.98 & \textbf{67.52} & 79.03 & 58.63 & 63.80 & \textbf{40.40} \\
    0.4 & \textbf{86.22} & 73.75 & 66.08 & 79.14 & 60.53 & 61.80 & 39.20 \\
    
    \rowcolor{tabhighlight} 
    \textbf{0.5} & 85.82 & 76.17 & 67.09 & 79.45 & 62.55 & \textbf{64.80} & 39.40 \\
    
    0.6$^\dagger$ & 85.43 & \textbf{76.77} & 66.79 & \textbf{79.51} & \textbf{63.09} & 64.40 & 39.00 \\
    
    \bottomrule
\end{tabular}
\vspace{-3mm}
\end{table}

\noindent\textbf{Baselines.}
We adopt Qwen2.5-Omni~\cite{xu2025qwen2} as our backbone and reproduce AffectGPT-R1~\cite{lian2025affectgptr1} following its official training recipe.
Furthermore, we construct a stronger \textit{Baseline} by adopting the optimized SFT/RL data split described in \cref{sec:method}, trained via standard GRPO with only task and format rewards (see Appendix~\ref{sec:task-baseline} for more details).
For fairness, all our reproduced and proposed models are evaluated using a single final checkpoint across all datasets.
By contrast, prior work~\cite{lian2025affectgpt, lian2025affectgptr1} selects the best checkpoint per dataset, which tends to overestimate performance, see Appendix~\ref{appendix:evaluation-setting} for more discussion.

\noindent\textbf{Implementation details.}
In the cold-start SFT stage, we set the learning rate to $2\mathrm{e}{-}5$ and train for 2 epochs.
During RL, we adopt a smaller learning rate of $2\mathrm{e}{-}6$ and train for 1 epoch with 3,262 overall training steps using GRPO-style sampling, where $G{=}4$ responses are generated per prompt.
We fix the batch size to 1 with gradient accumulation steps set to 2 throughout training.
The KL coefficient with respect to the reference model is set to $0.04$.
For our \ourmethod, we use $\alpha=0.5$ for the Omni-Perception Reward, $\beta=8\mathrm{e}{-}3$ for the Omni-Perception Loss, a masking ratio of $\rho=0.7$, and a similarity threshold of $\delta=0.5$, which together yield the optimal overall performance.
A detailed hyperparameter analysis is provided in ~\cref{sec:ablation}.
All experiments were conducted on 16 NVIDIA H100 GPUs.

\subsection{Multimodal Emotion Perception Benchmark}
\label{sec:bench_setting}

\noindent\textbf{Benchmark construction.}
We introduce \ourbench, a diagnostic benchmark derived from OV-MERD~\cite{lian2023explainable}, to quantify \textit{utilization} and \textit{faithfulness}. We filter cue-rich instances and obtain 300 samples. 
(1) \textit{Utilization} is measured by cue recall, where Qwen2.5-7B-Instruct~\cite{Yang2024Qwen25TR} decomposes human annotations and the generated thinking process into atomic cues that are subsequently embedded by Qwen3-Embedding; a ground-truth cue is recalled if its maximum cosine similarity to any generated cue exceeds 0.6.
(2) \textit{Faithfulness} is evaluated via a POPE-style unimodal masking probe~\cite{li2023evaluating,ye2025eyes}. On samples with strong audio--visual correlations, we construct 500 probe questions per modality targeting masked evidence (\eg, mask video and ask ``Does the video show a frown?''). Under masking, where the unmasked modality acts as a strong distractor, an affirmative answer signals hallucination, ensuring that correct refusals reflect genuine resistance to spurious priors.

\begin{figure}[t]
\centering
\includegraphics[width=\linewidth]{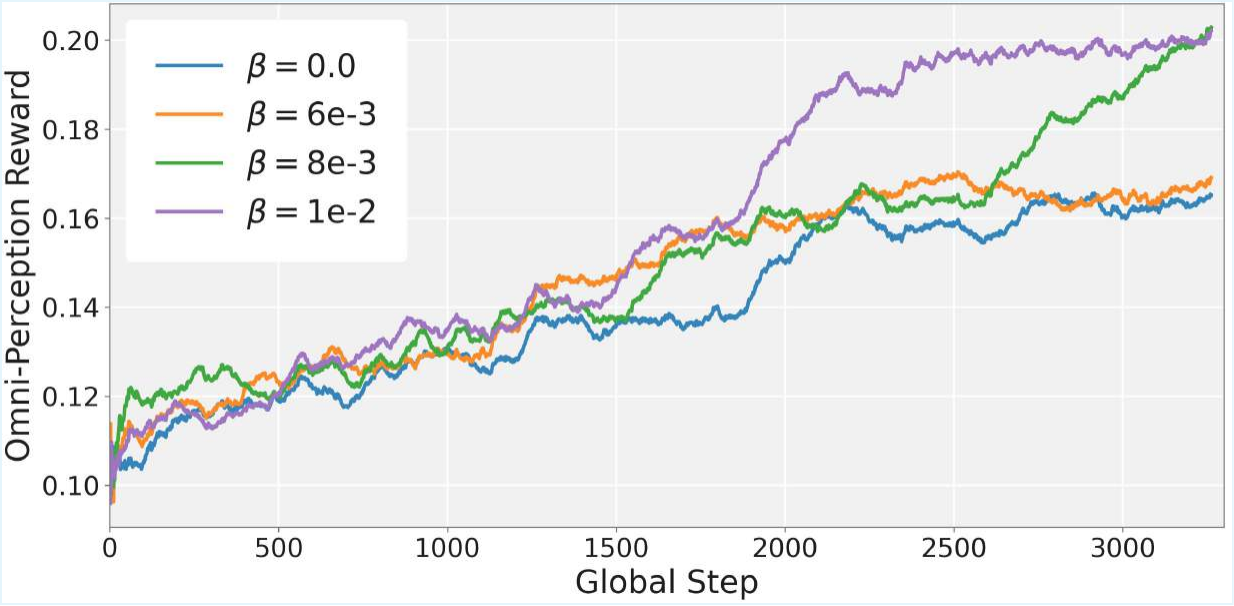}
\captionsetup{skip=6pt} 
\caption{\textbf{Synergistic Effect of Omni-Perception Loss on Reward.} While minimal weights yield limited gains, increasing $\beta$ to a sufficient magnitude ($\ge 8\mathrm{e}{-}3$) significantly elevates the final Omni-Perception Reward.}
\label{fig:fig3}
\vspace{-6mm}
\end{figure}

\noindent\textbf{MEP-Bench results.}
As shown in Table~\ref{tab:diagnostic_main}, existing models still struggle to achieve reliable perception under our two principles, \emph{utilization} and \emph{faithfulness}. 
For SFT baselines, performance remains limited: the fine-tuned Qwen2.5-Omni~\cite{xu2025qwen2} recalls only 53.00\% of human-annotated cues, and its low masking-based perception accuracy indicates frequent faithfulness violations.
For RL baselines, AffectGPT-R1~\cite{lian2025affectgptr1} brings only marginal recall gains and remains weak on faithfulness. HumanOmni-V2~\cite{yang2025humanomniv2} achieves oracle-level recall via OV-MERD cold-start training, yet still performs poorly on faithfulness probes, suggesting persistent cross-modal hallucination.
In contrast, \ourmethod~ substantially improves both metrics: it boosts cue recall by 12.44\% to 70.44\%, narrowing the gap to the oracle reference, and increases perception accuracy by 17.20\% on video and 17.60\% on audio over the \textit{Baseline}. These results demonstrate that \ourmethod~ effectively adheres to the principles of reliable perception, promoting both sufficient cue utilization and modality-grounded reasoning.

\subsection{Main Results on MER-UniBench}
\label{sec:main_results}

\begin{table}[t]
\centering

\fontsize{8.0}{10}\selectfont 
\renewcommand{\arraystretch}{1.2}
\setlength{\tabcolsep}{3.2pt}

\caption{\textbf{Impact of Omni-Perception Loss Weight $\beta$.} Increasing $\beta$ to a sufficient level ($\ge 8\text{e-}3$) effectively strengthens Faithfulness, while simultaneously yielding significant improvements in Utilization and overall task performance.}
\label{tab:ablation_beta}

\begin{tabular}{c c c c c c c c}
    \toprule

    \multirow{2}{*}{\textbf{Weight $\beta$}} 
    & \multicolumn{4}{c}{\textbf{Task Performance}} 
    & \textbf{Util.} 
    & \multicolumn{2}{c}{\textbf{Faithfulness}} \\
    \cmidrule(lr){2-5}
    \cmidrule(lr){6-6}
    \cmidrule(lr){7-8}
    
    & \textbf{Senti.} 
    & \textbf{Basic.} 
    & \textbf{Fine.} 
    & \textbf{Mean} 
    & \textbf{Recall} 
    & \textbf{Acc@V} 
    & \textbf{Acc@A} \\
    \midrule
    
    2e-3  & \textbf{86.62} & 76.23 & 65.79 & 79.69 & 60.56 & 63.60 & 39.80  \\
    4e-3  & 86.60 & 76.59 & 66.34 & 79.90 & 61.53 & 70.60 & 40.80  \\
    6e-3  & 86.25 & 77.30 & 66.08 & 80.03 & 62.98 & 73.00 & 45.20  \\
    
    \rowcolor{tabhighlight} 
    \textbf{8e-3} & 86.40 & 78.93 & \textbf{67.16} & \textbf{81.05} & \textbf{70.44} & \textbf{76.40} & \textbf{50.60} \\
    
    1e-2  & 84.81 & \textbf{80.36} & 66.61 & 80.81 & 69.97 & 73.80 & 46.80 \\
    \bottomrule
\end{tabular}
\vspace{-5mm}
\end{table}



    
    
    
    

\noindent\textbf{SoTA performance on MER-UniBench.}
\cref{tab:main} presents the comprehensive evaluation results on MER-UniBench. \ourmethod~achieves a new state-of-the-art mean score of 81.05\%. Notably, it consistently outperforms the reproduced AffectGPT-R1 baseline across all datasets, with a substantial average improvement of 5.45\%.
Specifically, in Sentiment Analysis, \ourmethod~attains superior accuracy on datasets like MOSI (86.50\%) and SIMSv2 (88.26\%), demonstrating its precision in discerning subtle variations in affective polarity.
In Basic Emotion Recognition, it delivers robust performance across diverse scenarios, particularly achieving 87.73\% on MER23 and 90.34\% on MER24, which highlights its strong categorical discrimination capabilities.
Furthermore, in the challenging Fine-grained Emotion Recognition task, the model secures a high score of 67.16\% on OV-MERD+, showcasing its effectiveness in open-vocabulary emotion prediction and accurately mapping complex signals onto the emotion wheels.
It is worth noting that \ourmethod~surpasses the reported results of AffectGPT and AffectGPT-R1, even though they benefit from dataset-specific model selection. This highlights the superior robustness and effectiveness of our approach. We provide a detailed discussion regarding this evaluation setting in Appendix \ref{appendix:evaluation-setting}.

\noindent\textbf{SoTA performance on MME-Emotion.}
\cref{tab:mme_emotion_main} reports the results on MME-Emotion. Since the original GPT-4o evaluator is no longer available, we uniformly re-evaluate all methods using \texttt{gemini-3.1-flash-lite-preview}.
\ourmethod~achieves the best overall performance, obtaining 31.0\%, 68.1\%, and 49.5\% on the mean recognition, reasoning, and CoT metrics, respectively.
Notably, \ourmethod~improves the mean reasoning score from 62.7\% to 68.1\%, demonstrating a substantial enhancement in emotion reasoning capability.
These results show that \ourmethod~strengthens both affective reasoning and recognition, leading to superior CoT performance on MME-Emotion.


\subsection{Ablation Studies and Analyses}
\label{sec:ablation}

\begin{table}[t]
\centering

\fontsize{8}{10}\selectfont 
\renewcommand{\arraystretch}{1.2}
\setlength{\tabcolsep}{3.2pt}

\caption{\textbf{Impact of Mask Ratio $\rho$.} We find that under relatively mild masking intensity, the model effectively enhances Faithfulness. Specifically, $\rho=0.7$ achieves the optimal balance with Utilization and task performance.}
\label{tab:ablation_rho}

\begin{tabular}{c c c c c c c c}
    \toprule

    \multirow{2}{*}{\textbf{Ratio $\rho$}} 
    & \multicolumn{4}{c}{\textbf{Task Performance}} 
    & \textbf{Util.} 
    & \multicolumn{2}{c}{\textbf{Faithfulness}} \\
    \cmidrule(lr){2-5}
    \cmidrule(lr){6-6}
    \cmidrule(lr){7-8}
    
    & \textbf{Senti.} 
    & \textbf{Basic.} 
    & \textbf{Fine.} 
    & \textbf{Mean} 
    & \textbf{Recall} 
    & \textbf{Acc@V} 
    & \textbf{Acc@A} \\
    \midrule
    
    0.6 & 85.46 & 79.61 & 66.63 & 80.77 & 68.94 & 75.20 & \textbf{55.20} \\
    
    \rowcolor{tabhighlight} 
    \textbf{0.7} & \textbf{86.40} & 78.93 & \textbf{67.16} & \textbf{81.05} & \textbf{70.44} & \textbf{76.40} & 50.60 \\
    
    0.8 & 85.38 & \textbf{79.67} & 66.58 & 80.75 & 68.92 & 73.60 & 50.20 \\
    0.9 & 86.13 & 78.66 & 65.76 & 80.55 & 69.32 & 64.20 & 40.80 \\
    1.0 & 85.27 & 78.81 & 66.75 & 80.34 & 67.53 & 69.60 & 48.20 \\
    
    \bottomrule
\end{tabular}
\vspace{-5mm}
\end{table}



    
    
    
    

\noindent\textbf{Overall ablation analyses.}
Table~\ref{tab:ablation_main} summarizes the contribution of each component.
Adding the Omni-Perception Reward $R_\mathsf{omni}$ to the \textit{Baseline} consistently improves performance, increasing the mean score from 77.87\% to 79.45\% and cue recall from 58.00\% to 62.55\%.
We further examine whether generic perception losses help by comparing two variants: (1) PAPO, which masks audio and video jointly and applies a sequence-level KL constraint, and (2) PAPO-dual, which separately masks audio and video to sum two KL terms, yet still applies constraints indiscriminately to all tokens.
Both variants fail to improve task performance and even degrade utilization and faithfulness, suggesting that unguided global constraints encourage the model to compensate via hallucinated evidence.
In contrast, \ourmethod~ combines $R_\mathsf{omni}$ with the proposed Omni-Perception Loss $\mathcal{J}_\mathsf{omni}(\theta)$ and yields substantial gains across all metrics: it achieves the best mean score of 81.05\%, boosts utilization to 70.44\%, and improves faithfulness to 76.40\% on video probes and 50.60\% on audio probes.

\noindent\textbf{Impact of Omni-Perception Reward weight $\alpha$.}
Table~\ref{tab:ablation_alpha} studies the effect of the reward weight $\alpha$. As $\alpha$ increases from 0.2 to 0.5, cue recall steadily improves from 59.16\% to 62.55\%, indicating stronger multimodal evidence coverage. While $\alpha=0.6$ brings only a marginal gain in mean score, it causes training instability and eventually triggers format collapse. We therefore set $\alpha=0.5$ to balance stable optimization with sufficient cue utilization.

\noindent\textbf{Impact of Omni-Perception Loss weight $\beta$.}
Table~\ref{tab:ablation_beta} analyzes the effect of the Omni-Perception Loss weight $\beta$. Increasing $\beta$ to $8\mathrm{e}{-}3$ enforces modality-grounded reasoning and leads to a sharp improvement in faithfulness, with probe accuracy reaching 76.40\% on video and 50.60\% on audio. This stronger grounding also synergizes with the reward signal, boosting cue recall to 70.44\% and achieving the best overall performance. Further increasing $\beta$ to $1\mathrm{e}{-}2$ slightly degrades results, so we set $\beta=8\mathrm{e}{-}3$.

\noindent\textbf{Synergistic effect analysis.}
To investigate the synergistic interaction between the Omni-Perception Loss and Reward, we visualize the reward training trajectories in Figure~\ref{fig:fig3} by varying the weight $\beta$. We observe that with minimal weights ($\beta \le 6\mathrm{e}{-}3$), the reward yields limited gains. However, once $\beta$ reaches a sufficient magnitude ($\ge 8\mathrm{e}{-}3$), the converged reward value increases substantially. This improvement arises because both components optimize the same underlying capability of multimodal perception, yet they are theoretically coupled: $\mathcal{J}_\mathsf{omni}(\theta)$ expands the faithful information channel, which establishes the upper bound for $R_\mathsf{omni}$. As derived in Appendix~\ref{sec:theory}, a sufficient $\beta$ is crucial to elevate this information ceiling, thereby unlocking the potential for maximizing the reward.

\begin{table}[t]
\centering

\fontsize{8}{10}\selectfont 
\renewcommand{\arraystretch}{1.2}
\setlength{\tabcolsep}{3.2pt}

\caption{\textbf{Impact of Similarity Threshold $\delta$.} A relatively small $\delta$ approximates a modality-agnostic approach, causing a significant performance drop. For $\delta \ge 0.5$, increasing the threshold further improves Utilization, while $\delta=0.5$ achieves the best trade-off.}
\label{tab:ablation_delta}

\begin{tabular}{c c c c c c c c}
    \toprule

    \multirow{2}{*}{\textbf{Thres. $\delta$}} 
    & \multicolumn{4}{c}{\textbf{Task Performance}} 
    & \textbf{Util.} 
    & \multicolumn{2}{c}{\textbf{Faithfulness}} \\
    \cmidrule(lr){2-5}
    \cmidrule(lr){6-6}
    \cmidrule(lr){7-8}
    
    & \textbf{Senti.} 
    & \textbf{Basic.} 
    & \textbf{Fine.} 
    & \textbf{Mean} 
    & \textbf{Recall} 
    & \textbf{Acc@V} 
    & \textbf{Acc@A} \\
    \midrule
    
    0.40 & 86.24 & 76.40 & 66.85 & 79.71 & 60.73 & 68.40 & 46.40 \\
    0.45 & \textbf{86.69} & 76.24 & 66.59 & 79.81 & 60.62 & 68.20 & 44.60 \\
    
    \rowcolor{tabhighlight} 
    \textbf{0.50} & 86.40 & 78.93 & \textbf{67.16} & \textbf{81.05} & 70.44 & \textbf{76.40} & \textbf{50.60} \\
    
    0.55 & 80.18 & 79.76 & 67.16 & 80.76 & 71.77 & 72.00 & 48.00 \\
    0.60 & 85.58 & \textbf{80.03} & 66.93 & 81.04 & \textbf{72.54} & 68.00 & 46.60 \\
    
    \bottomrule
\end{tabular}
\vspace{-5mm}
\end{table}



    
    
    
    

\noindent\textbf{Impact of mask ratio $\rho$.}
Table~\ref{tab:ablation_rho} investigates sensitivity to the mask ratio $\rho$. The model benefits most from moderate masking, which ensures sufficient sensitivity to modality changes to enforce faithfulness. In contrast, increasing $\rho$ to 0.9 or 1.0 is overly aggressive and causes clear declines in both faithfulness and overall performance. We therefore set $\rho=0.7$ as the final configuration, achieving the best balance between perception metrics and task performance.

\noindent\textbf{Impact of similarity threshold $\delta$.}
Table~\ref{tab:ablation_delta} analyzes the effect of the similarity threshold $\delta$. Lower thresholds cause a notable performance drop, as the relaxed constraint becomes closer to a modality-agnostic setting like PAPO. Increasing $\delta$ tightens relevance, raising the standard of the reward and encouraging stronger cue retrieval, but it also reduces the tokens regulated by the Omni-Perception Loss and leads to sparser regularization. Therefore, we set $\delta=0.5$, which provides the best balance across all metrics.

\noindent\textbf{Qualitative Analysis.}
We provide comprehensive visualizations of reasoning trajectories in Appendix~\ref{sec:qualitative-analysis}. We observe that \ourmethod\ significantly enhances \textit{utilization} by capturing richer multimodal cues compared to the baseline, directly leading to accurate predictions. Furthermore, regarding \textit{faithfulness}, while the baseline driven by strong audio-video spurious correlations naturally generates false explanations, our model effectively mitigates such errors, providing strictly grounded reasoning. These improvements are consistent across fine-grained emotion recognition, basic emotion recognition, and sentiment analysis tasks.

\section{Conclusion} 
\label{sec:conclusion}

In this work, we tackle unreliable omni-modal perception in Multimodal Emotion Reasoning. With our diagnostic benchmark \ourbench, we show that existing emotion-oriented Omni-MLLMs often underuse multimodal cues and produce unfaithful reasoning under cross-modal spurious correlations. To address this, we propose \ourmethod, an RL framework that combines \ourreward\ to improve cue coverage with a targeted \ourloss\ that penalizes hallucinations under unimodal masking. Experiments show that \ourmethod\ achieves state-of-the-art results on MER-UniBench while substantially improving both cue utilization and reasoning faithfulness. Overall, our findings highlight that reliable perception is essential for robust multimodal emotion reasoning.

\clearpage
\section*{Acknowledgments}

This work was supported by SenseTime Research, the National Natural Science Foundation of China (NSFC) under Grants U22A2094 and 62402471, and the Yangtze River Delta Science and Technology Innovation Community Joint Research (Basic Research) Project under Grant 2025CSJZN01600.

\section*{Impact Statement}
This paper presents work aimed at enhancing the robustness and reliability of Multimodal Emotion Reasoning by optimizing how models perceive and utilize diverse sensory signals. By ensuring that emotional interpretations are more consistently and accurately grounded in visual and acoustic evidence, our \ourmethod~provides a foundation for more dependable affective computing systems. Such advancements are crucial for fostering trust and safety in human-AI collaboration, particularly in fields like assistive technologies, empathetic interface design, and supportive education. Ultimately, these improvements facilitate more natural and effective interactions, ensuring that artificial agents can better understand and respond to human emotional states in a stable and interpretable manner.

\nocite{langley00}

\bibliography{reference}
\bibliographystyle{icml2026}


\newpage
\appendix
\onecolumn

\section{Theoretical Analysis of Optimization Synergy}
\label{sec:theory}

In this section, we provide an information-theoretic~\cite{shannon1948mathematical} interpretation for the synergy between the Omni-Perception Loss $\mathcal{J}_{\mathrm{omni}}$ and the Omni-Perception Reward $R_{\mathrm{omni}}$. We follow the notations in the main paper: the multimodal input $X = \{V, A, T\}$, where $V$, $A$, and $T$ denote visual, acoustic, and textual streams, respectively. The policy $\pi_\theta$ generates a reasoning chain $Y$. We denote by $\mathcal{T}_{m}$ the set of evidence tokens for a target modality $m \in \{V, A\}$, and by $\mathcal{C}_{m}$ the corresponding ground-truth evidence cues (\eg, $\mathcal{C}_V$ or $\mathcal{C}_A$). For clarity, we denote the masked input where $m$ is removed as $X^{\tilde{m}}$ (\eg, if $m = V$, then $X^{\tilde V} = \{\tilde V, A, T\}$).

\subsection{Assumptions}

\begin{assumption}[Deterministic GT Evidence]
\label{assumption1}
The ground-truth evidence $\mathcal{C}_{m}$ is a deterministic function of the modality $m$, \ie, $\mathcal{C}_{m} = f(m)$. Moreover, $\mathcal{C}_{m}$ is conditionally independent of the remaining context $X^{\tilde{m}}$ given $m$.
\end{assumption}

This assumption captures the data generation process where human-annotated evidence cues are derived exclusively from the corresponding sensory modality, independent of other modalities when conditioned on $m$.

\begin{assumption}[Variational Approximation]
\label{assumption2}
The masked policy $\pi_\theta(\cdot | X^{\tilde{m}})$ provides a variational approximation to the true marginal distribution $P_\theta(\mathcal{T}_{m} | X^{\tilde{m}}) = \mathbb{E}_{m} [\pi_\theta(\mathcal{T}_{m} | X)]$.
\end{assumption}

From an information-theoretic perspective, we posit that the masked policy serves as a tractable variational approximation to the intractable marginal distribution over evidence tokens. The Omni-Perception Loss implicitly encourages this approximation, and its empirical stability during training validates this assumption.

\subsection{Proposition: Omni-Perception Loss as a Conditional Mutual Information Proxy}
\label{proposition1}
We first recall the definition of conditional mutual information in our context. Given the random variables $\mathcal{T}_{m}$ (evidence tokens) and $m$ (target modality), conditioned on the masked context $X^{\tilde{m}}$, the conditional mutual information induced by the policy $\pi_\theta$ is defined as:
\begin{equation}
I_\theta(\mathcal{T}_{m}; m | X^{\tilde{m}}) = \mathbb{E}_{X} \big[ \mathbb{D}_{\mathrm{KL}}\big( \pi_\theta(\cdot | X) \,\|\, P_\theta(\cdot | X^{\tilde{m}}) \big) \big],
\end{equation}
where $P_\theta(\cdot | X^{\tilde{m}})$ is the true marginal distribution of evidence tokens given the masked context. This quantity measures the information that $\mathcal{T}_{m}$ carries about $m$ when $X^{\tilde{m}}$ is observed.

\begin{proposition}[Conditional Mutual Information Decomposition]
The Omni-Perception Loss $\mathcal{J}_{\mathrm{omni}}(\theta)$ can be decomposed into the conditional mutual information $I_\theta(\mathcal{T}_{m}; m | X^{\tilde{m}})$ and a variational approximation error:
\begin{equation}
\mathcal{J}_{\mathrm{omni}}(\theta) = I_\theta(\mathcal{T}_{m}; m | X^{\tilde{m}}) + \Delta_{\mathrm{var}}(\theta),
\end{equation}
where $\Delta_{\mathrm{var}}(\theta) = \mathbb{E}_{X^{\tilde{m}}} \big[ \mathbb{D}_{\mathrm{KL}}\big( P_\theta(\mathcal{T}_{m} | X^{\tilde{m}}) \,\|\, \pi_\theta(\mathcal{T}_{m} | X^{\tilde{m}}) \big) \big] \geq 0$.
\end{proposition}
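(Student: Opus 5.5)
The plan is to read $\mathcal{J}_{\mathrm{omni}}(\theta)$ for a fixed target modality $m$ as an expected KL divergence between two distributions over the evidence tokens $\mathcal{T}_m$. Then I apply the usual ``add and subtract the true marginal'' trick from variational bounds on mutual information. Writing $q_\theta(\cdot\mid X^{\tilde m}) := \pi_\theta(\cdot\mid X^{\tilde m})$ for the masked policy and $P_\theta(\cdot\mid X^{\tilde m}) = \mathbb{E}_{m\mid X^{\tilde m}}[\pi_\theta(\cdot\mid X)]$ for the true marginal of Assumption~\ref{assumption2}, the target identity is
\begin{equation*}
\mathbb{E}_X\!\left[\log\frac{\pi_\theta(\mathcal{T}_m\mid X)}{q_\theta(\mathcal{T}_m\mid X^{\tilde m})}\right]
=\mathbb{E}_X\!\left[\log\frac{\pi_\theta(\mathcal{T}_m\mid X)}{P_\theta(\mathcal{T}_m\mid X^{\tilde m})}\right]
+\mathbb{E}_X\!\left[\log\frac{P_\theta(\mathcal{T}_m\mid X^{\tilde m})}{q_\theta(\mathcal{T}_m\mid X^{\tilde m})}\right].
\end{equation*}
Here $\mathcal{T}_m\sim\pi_\theta(\cdot\mid X)$ inside each expectation.

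\textbf{Step 1: token sum as a sequence-level KL.} First I rewrite the token-level sum $\sum_{t\in\mathcal{T}_m}\mathbb{D}_{\mathrm{KL}}(p_t\,\|\,p_t^{\tilde m})$ as a single KL over the evidence-token sequence. Both $p_t$ and $p_t^{\tilde m}$ are evaluated on the same teacher-forced prefix $Y_{<t}$, which is sampled from the full-input policy. The chain rule for KL then gives
\begin{equation*}
\mathbb{E}\Big[\textstyle\sum_{t}\mathbb{D}_{\mathrm{KL}}(p_t\,\|\,p_t^{\tilde m})\Big]=\mathbb{E}_X\big[\mathbb{D}_{\mathrm{KL}}\big(\pi_\theta(\mathcal{T}_m\mid X)\,\|\,\pi_\theta(\mathcal{T}_m\mid X^{\tilde m})\big)\big].
\end{equation*}
In this step I treat the masked input $X^{\tilde m}$ as the context with $m$ removed, as the appendix notation does, and take expectations over data and rollouts.

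\textbf{Step 2: identify the two terms.} By the definition recalled before the proposition, the first term in the display above is exactly $I_\theta(\mathcal{T}_m; m\mid X^{\tilde m})$. For the second term, the integrand depends on $m$ only through the sampling of $\mathcal{T}_m$. I therefore use the tower property $\mathbb{E}_X=\mathbb{E}_{X^{\tilde m}}\mathbb{E}_{m\mid X^{\tilde m}}$. Pulling $\mathbb{E}_{m\mid X^{\tilde m}}$ inside, the mixture $\mathbb{E}_{m\mid X^{\tilde m}}\pi_\theta(\cdot\mid X)$ collapses to $P_\theta(\cdot\mid X^{\tilde m})$. The second term thus becomes $\mathbb{E}_{X^{\tilde m}}[\mathbb{D}_{\mathrm{KL}}(P_\theta\,\|\,q_\theta)]=\Delta_{\mathrm{var}}(\theta)$, and Gibbs' inequality gives $\Delta_{\mathrm{var}}(\theta)\ge 0$. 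Summing over $m\in\{V,A\}$ gives the stated decomposition. Assumption~\ref{assumption2} is what justifies calling $q_\theta$ the variational surrogate in this reading. Assumption~\ref{assumption1} is not needed here; I expect it to enter only in the subsequent coupling to $R_{\mathrm{omni}}$.

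\textbf{Main obstacle.} The hard step is Step 1, not the algebra of Step 2. The token sets $\mathcal{T}_V,\mathcal{T}_A$ are chosen data-dependently by cue matching, and the prefix $Y_{<t}$ includes non-evidence tokens. The chain rule only yields a clean sequence KL if the non-evidence positions are either treated as part of the shared conditioning or contribute zero. I would handle this by defining the random variable $\mathcal{T}_m$ as the evidence tokens conditioned on the full generated prefix, and by noting that the selection depends only on $Y$, not on the masked input. I would also state explicitly that partial masking with $\rho<1$ is idealized as full removal of $m$.
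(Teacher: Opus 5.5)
Your Step 2 is the paper's proof. You add and subtract $\log P_\theta$, identify the first term with $I_\theta(\mathcal{T}_m; m\mid X^{\tilde m})$ by its stated definition and the second with $\Delta_{\mathrm{var}}(\theta)\ge 0$; your explicit tower-property step (reading $P_\theta$ as the mixture over $m$ given $X^{\tilde m}$) makes precise what the paper leaves implicit.

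The paper skips your Step 1 by declaring $\mathcal{J}_{\mathrm{omni}}(\theta)=\mathbb{E}_X[\mathbb{D}_{\mathrm{KL}}(\pi_\theta(\mathcal{T}_m\mid X)\,\|\,\pi_\theta(\mathcal{T}_m\mid X^{\tilde m}))]$ by definition, and you should do the same. Your sketched fix does not close the obstacle you found. Clause-level cue matching makes membership of position $t$ in $\mathcal{T}_m$ depend on $Y_t$ and later tokens, whereas the chain rule needs the selection to be fixed by the prefix; independence from the masked input is not the relevant condition. Moreover, either way of neutralising non-evidence positions makes the reference distribution depend on $m$, through full-input kernels or through prefixes sampled under the full input, so your Step 2 collapse would no longer yield the stated $I_\theta(\mathcal{T}_m; m\mid X^{\tilde m})$.
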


\noindent \textit{Proof.}
By definition, the Omni-Perception Loss is the expected $\mathbb{D}_{\mathrm{KL}}$ divergence between the full policy and the masked policy on modality-specific tokens:
\begin{equation}
\mathcal{J}_{\mathrm{omni}}(\theta) = \mathbb{E}_{X} \big[ \mathbb{D}_{\mathrm{KL}}\big( \pi_\theta(\mathcal{T}_{m} | X) \,\|\, \pi_\theta(\mathcal{T}_{m} | X^{\tilde{m}}) \big) \big].
\end{equation}
By inserting $P_\theta$ and expanding the $\mathbb{D}_{\mathrm{KL}}$ as an expectation over $\mathcal{T}_{m} \sim \pi_\theta(\cdot|X)$, we obtain:
\begin{align}
\mathcal{J}_{\mathrm{omni}}(\theta)
&= \mathbb{E}_{X} \big[ \mathbb{E}_{y \sim \pi_\theta(\cdot|X)} \big[ \log \frac{\pi_\theta(y | X)}{P_\theta(y | X^{\tilde{m}})} + \log \frac{P_\theta(y | X^{\tilde{m}})}{\pi_\theta(y | X^{\tilde{m}})} \big] \big] \notag \\
&= \underbrace{\mathbb{E}_{X} \big[ \mathbb{D}_{\mathrm{KL}}\big( \pi_\theta(\cdot | X) \,\|\, P_\theta(\cdot | X^{\tilde{m}}) \big) \big]}_{I_\theta(\mathcal{T}_{m}; m | X^{\tilde{m}})} + \underbrace{\mathbb{E}_{X^{\tilde{m}}} \big[ \mathbb{D}_{\mathrm{KL}}\big( P_\theta(\cdot | X^{\tilde{m}}) \,\|\, \pi_\theta(\cdot | X^{\tilde{m}}) \big) \big]}_{\Delta_{\mathrm{var}}(\theta)}.
\end{align}

\noindent\textbf{Interpretation.} Proposition~\ref{proposition1} establishes $\mathcal{J}_{\mathrm{omni}}$ as a variational upper bound on the conditional mutual information. While maximizing this bound could theoretically increase the variational gap $\Delta_{\mathrm{var}}$ rather than the mutual information $I_\theta$, our framework prevents this trivial solution through parameter sharing. Since the variational approximation $\pi_\theta(\cdot | X^{\tilde{m}})$ and the full policy $\pi_\theta(\cdot | X)$ share the same backbone, they are structurally coupled. Arbitrarily inflating $\Delta_{\mathrm{var}}$ would require distorting the masked policy, which would inadvertently degrade the full policy. Consequently, maximizing $\mathcal{J}_{\mathrm{omni}}$ forces the model to genuinely increase the true mutual information, ensuring faithful modality-specific perception.

\subsection{Proposition: Information-Theoretic Upper Bound on Reward}
\label{proposition2}
To strictly connect the reward to our information-theoretic framework, we interpret the goal of $R_{\mathrm{omni}}$ as maximizing the information recovery of ground-truth cues.



\paragraph{Information-theoretic interpretation of $R_{\mathrm{omni}}$.}
We interpret the semantic alignment reward $R_{\mathrm{omni}}$ as an information-theoretic surrogate that encourages the recovery of ground-truth evidence cues from the generated modality-specific tokens. In particular, $R_{\mathrm{omni}}$ promotes semantic consistency between the generated evidence $\mathcal{T}_{m}$ and the ground-truth cues $\mathcal{C}_{m}$, which intuitively corresponds to reducing the conditional uncertainty of $\mathcal{C}_{m}$ given $\mathcal{T}_{m}$ and the masked context $X^{\tilde{m}}$, i.e., $H(\mathcal{C}_{m} \mid \mathcal{T}_{m}, X^{\tilde{m}})$.
Since $I_\theta(\mathcal{T}_{m}; \mathcal{C}_{m} \mid X^{\tilde{m}}) = H(\mathcal{C}_{m} \mid X^{\tilde{m}}) - H(\mathcal{C}_{m} \mid \mathcal{T}_{m}, X^{\tilde{m}})$, and the conditional entropy $H(\mathcal{C}_{m} \mid X^{\tilde{m}})$ is independent of the model parameters $\theta$, encouraging the reduction of $H(\mathcal{C}_{m} \mid \mathcal{T}_{m}, X^{\tilde{m}})$ is consistent with increasing the conditional mutual information $I_\theta(\mathcal{T}_{m}; \mathcal{C}_{m} \mid X^{\tilde{m}})$. $R_{\mathrm{omni}}$ provides a practical and task-aligned proxy that steers the model toward evidence tokens that are semantically informative of the ground-truth cues.

\begin{proposition}[Information Bound]
The mutual information between the generated evidence and the ground-truth cues is upper-bounded by the mutual information between the evidence and the modality:
\begin{equation}
    \label{eq:prop_bound}
    I_\theta(\mathcal{T}_{m}; \mathcal{C}_{m} | X^{\tilde{m}}) \leq I_\theta(\mathcal{T}_{m}; m | X^{\tilde{m}}).
\end{equation}
\end{proposition}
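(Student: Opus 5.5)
This is a data-processing inequality argument: Assumption~\ref{assumption1} supplies a Markov chain, and the chain rule for conditional mutual information does the rest. Throughout we condition on the masked context $X^{\tilde m}$.

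\textbf{Step 1: the Markov structure.} Under the policy, the generated evidence $\mathcal{T}_m$ is produced from the full input $X=(m,X^{\tilde m})$. By Assumption~\ref{assumption1}, $\mathcal{C}_m=f(m)$ is a deterministic function of $m$ and is conditionally independent of $X^{\tilde m}$ given $m$. The policy's randomness never sees $\mathcal{C}_m$ except through $m$. Hence, conditionally on $X^{\tilde m}$, we have the chain
\[
\mathcal{C}_m \;-\; m \;-\; \mathcal{T}_m ,
\]
that is, $I_\theta(\mathcal{T}_m;\mathcal{C}_m\mid m,X^{\tilde m})=0$. Since $\mathcal{C}_m$ is a function of $m$, this is in fact immediate: $H(\mathcal{C}_m\mid m,X^{\tilde m})=0$, so the conditional mutual information is bounded by zero.

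\textbf{Step 2: chain rule.} Expand $I_\theta(\mathcal{T}_m;(m,\mathcal{C}_m)\mid X^{\tilde m})$ in two ways:
\[
I(\mathcal{T}_m;m\mid X^{\tilde m})+I(\mathcal{T}_m;\mathcal{C}_m\mid m,X^{\tilde m})
= I(\mathcal{T}_m;\mathcal{C}_m\mid X^{\tilde m})+I(\mathcal{T}_m;m\mid \mathcal{C}_m,X^{\tilde m}).
\]
The second term on the left vanishes by Step 1. The last term on the right is nonnegative. Together these give the claimed bound \eqref{eq:prop_bound}. Equality holds iff $\mathcal{T}_m$ carries no information about $m$ beyond $\mathcal{C}_m$.

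\textbf{Main obstacle.} The delicate point is consistency of definitions. In Proposition~\ref{proposition1}, $I_\theta(\mathcal{T}_m;m\mid X^{\tilde m})$ was \emph{defined} as an expected KL against the true marginal $P_\theta(\cdot\mid X^{\tilde m})=\mathbb{E}_m[\pi_\theta(\cdot\mid X)]$. I would first verify that this coincides with the standard conditional mutual information of the joint law $p(X)\pi_\theta(\mathcal{T}_m\mid X)$. It does, since $\mathbb{E}_{X}\,\mathbb{D}_{\mathrm{KL}}(p(\mathcal{T}\mid m,X^{\tilde m})\,\|\,p(\mathcal{T}\mid X^{\tilde m}))$ is exactly that quantity. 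With this identification, the chain rule applies to a single well-defined joint distribution over $(X^{\tilde m},m,\mathcal{C}_m,\mathcal{T}_m)$. It also follows that $I_\theta(\mathcal{T}_m;\mathcal{C}_m\mid X^{\tilde m})$ is computed under the same joint.

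Combining with Proposition~\ref{proposition1} ($\Delta_{\mathrm{var}}\ge0$) then yields
\[
I_\theta(\mathcal{T}_m;\mathcal{C}_m\mid X^{\tilde m})\le \mathcal{J}_{\mathrm{omni}}(\theta).
\]
This is the ceiling interpretation used in the synergy discussion.
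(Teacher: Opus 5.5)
Your proposal is correct and matches the paper's proof: you expand $I_\theta(\mathcal{T}_m; m, \mathcal{C}_m \mid X^{\tilde m})$ by the chain rule in both orders, use Assumption~\ref{assumption1} ($\mathcal{C}_m = f(m)$) to make $I_\theta(\mathcal{T}_m; \mathcal{C}_m \mid m, X^{\tilde m}) = 0$, and drop the nonnegative term $I_\theta(\mathcal{T}_m; m \mid \mathcal{C}_m, X^{\tilde m})$. Your extra remarks are sound refinements that the paper leaves implicit: the bound $H(\mathcal{C}_m \mid m, X^{\tilde m}) = 0$, the check that the KL-based definition of $I_\theta(\mathcal{T}_m; m \mid X^{\tilde m})$ is the standard conditional mutual information (this needs the expectation defining $P_\theta$ to be taken under $p(m \mid X^{\tilde m})$), and the chaining with $\Delta_{\mathrm{var}} \ge 0$ to get $I_\theta(\mathcal{T}_m;\mathcal{C}_m\mid X^{\tilde m}) \le \mathcal{J}_{\mathrm{omni}}(\theta)$.
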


\noindent \textit{Proof.}
We consider the chain rule of mutual information for the triple $(\mathcal{T}_{m}, m, \mathcal{C}_{m})$ conditioned on $X^{\tilde{m}}$. We can decompose $I_\theta(\mathcal{T}_{m}; m, \mathcal{C}_{m} | X^{\tilde{m}})$ in two ways:
\begin{align}
    \label{eq:chain_rule_1}
    I_\theta(\mathcal{T}_{m}; m, \mathcal{C}_{m} | X^{\tilde{m}}) &= I_\theta(\mathcal{T}_{m}; m | X^{\tilde{m}}) + I_\theta(\mathcal{T}_{m}; \mathcal{C}_{m} | m, X^{\tilde{m}}), \\
    \label{eq:chain_rule_2}
    I_\theta(\mathcal{T}_{m}; m, \mathcal{C}_{m} | X^{\tilde{m}}) &= I_\theta(\mathcal{T}_{m}; \mathcal{C}_{m} | X^{\tilde{m}}) + I_\theta(\mathcal{T}_{m}; m | \mathcal{C}_{m}, X^{\tilde{m}}).
\end{align}
Since $\mathcal{C}_{m}$ is a deterministic function of $m$ (Assumption~\ref{assumption1}), given $m$, $\mathcal{C}_{m}$ provides no additional information about $\mathcal{T}_{m}$. Therefore, $I_\theta(\mathcal{T}_{m}; \mathcal{C}_{m} | m, X^{\tilde{m}}) = 0$. Equating the two decompositions, we have:
\begin{equation}
    \label{eq:equating}
    I_\theta(\mathcal{T}_{m}; m | X^{\tilde{m}}) = I_\theta(\mathcal{T}_{m}; \mathcal{C}_{m} | X^{\tilde{m}}) + \underbrace{I_\theta(\mathcal{T}_{m}; m | \mathcal{C}_{m}, X^{\tilde{m}})}_{\geq 0}.
\end{equation}
Because mutual information is non-negative, it follows directly that:
\begin{equation}
    \label{eq:final_result}
    I_\theta(\mathcal{T}_{m}; \mathcal{C}_{m} | X^{\tilde{m}}) \leq I_\theta(\mathcal{T}_{m}; m | X^{\tilde{m}}).
\end{equation}

\noindent\textbf{Interpretation.}
Proposition~\ref{proposition2} establishes a fundamental bottleneck: the amount of information that $\mathcal{T}_{m}$ can convey about $\mathcal{C}_{m}$ is strictly limited by the information that $\mathcal{T}_{m}$ extracts from the modality $m$ itself. Consequently, achieving a high reward is information-theoretically impossible without a sufficiently high mutual information $I_\theta(\mathcal{T}_{m}; m | X^{\tilde{m}})$. This justifies the role of $\mathcal{J}_{\mathrm{omni}}$ as a necessary regularizer that expands the upper bound (the channel capacity $I_\theta(\mathcal{T}_{m}; m | X^{\tilde{m}})$), thereby enabling the reward maximization.

\subsection{Discussion: Synergistic Mechanism and Empirical Validation}

\noindent\textbf{Synergy mechanism.}
Our theoretical analysis elucidates the synergistic collaboration between the Omni-Perception Loss $\mathcal{J}_{\mathrm{omni}}$ and the Omni-Perception Reward $R_{\mathrm{omni}}$. The optimization process relies on their distinct yet coupled functions: while Propositions~\ref{proposition1} and \ref{proposition2} establish the information-theoretic bounds, we specifically analyze the optimization dynamics as follows. First, $\mathcal{J}_{\mathrm{omni}}$ serves to expand the faithful information channel by addressing the information theory derived in Proposition~\ref{proposition2}. It provides a dense, token-level supervision signal that raises the upper bound of faithful information, ensuring the model possesses sufficient capacity to transmit modality features. Second, from a dynamics standpoint, this dense signal mitigates the optimization variance stemming from the sparse reward $R_{\mathrm{omni}}$. It steers the policy search space toward modality-sensitive regions, discouraging collapse into hallucinated local optima before the sparse reward signal becomes effective. Complementarily, $R_{\mathrm{omni}}$ encourages the utilization of semantically aligned cues. Acting as a steering mechanism for precision, it transforms this raw capacity into task-specific utility by guiding the model to filter noise and leverage evidence that is discriminative for emotion understanding.

\noindent\textbf{Empirical Validation.}
This theoretical dependency is directly validated by the observations in \cref{fig:fig3} and \cref{tab:ablation_beta}. The hyperparameter $\beta$ explicitly controls the weight of the Omni-Perception Loss, thereby determining the strength of the signal used to expand the information capacity. Specifically, when $\beta$ is insufficient, the loss signal is too weak to counteract the high variance of the RL gradients or drive the extraction of modality features; consequently, the mutual information remains low, rendering the reward optimization ineffective due to the strict upper bound on performance. Conversely, with an appropriate $\beta$, the increased weight ensures that the dense supervision successfully reduces optimization variance and elevates this information-theoretic ceiling. This stabilized landscape paves the way for the reward to become effective, enabling the model to efficiently capture discriminative cues and achieve high performance.

\section{Tasks Details}
\label{appendix:tasks-and-datasets}

We evaluate our method on MER-UniBench~\cite{lian2025affectgpt}, a unified benchmark that aggregates nine widely used multimodal emotion datasets spanning three representative tasks:
\emph{Fine-grained Emotion Recognition}, \emph{Basic Emotion Recognition}, and \emph{Sentiment Analysis}.
This appendix provides a self-contained description of task definitions, datasets, and evaluation metrics.

\subsection{Emotion Wheel-based Evaluation Metric}
\label{appendix:ew-metric-full}

Multimodal emotion reasoning outputs free-form emotion words with substantial lexical variability (\eg, inflections and synonyms), making exact string matching unreliable. 
Following prior work~\cite{lian2025affectgpt, lian2025affectgptr1}, we adopt an \emph{Emotion Wheel (EW)-based set-level} metric and apply a three-level normalization to both predictions and ground-truth labels.

\textbf{Level 1.}
Different morphological forms are mapped to their base form, \eg, \emph{happier} and \emph{happiness} are mapped to \emph{happy}.
This mapping function is denoted as $F_{l_1}(\cdot)$.

\textbf{Level 2.}
Semantically equivalent emotion words are mapped to a unified canonical form, \eg, \emph{joyful} and \emph{happy}.
This function is denoted as $F_{l_2}(\cdot)$.

\textbf{Level 3.}
Emotion wheels provide a structured organization of emotions, where fine-grained emotions are arranged as outer labels and basic emotions as inner categories.
We adopt $K=5$ emotion wheels following prior work~\cite{lian2025affectgpt}, as illustrated in Figure~\ref{fig:ew}.
For each wheel $w_k$, all outer labels are mapped to their corresponding inner categories using $F_{l_3}^{w_k}(\cdot)$.

The complete grouping function is defined as:
\begin{equation}
G_{w_k}(\cdot) = F_{l_3}^{w_k}\bigl(F_{l_2}(F_{l_1}(\cdot))\bigr), \quad k \in [1, K].
\end{equation}

\noindent\textbf{Set-level metric.}
Since both predicted and ground-truth emotion labels are variable-length sets, we employ set-level evaluation.
Suppose the dataset contains $N$ samples.
For sample $i$, let the ground-truth emotion set be $\mathcal{Y}_{(i)} = \{y_{(i)}^j\}_{j=1}^{n_i}$.
Let $Y_{(i)}$ be the structured output generated by $\pi_\theta(\cdot\mid X_{(i)})$, and let $Y_{(i)}^{a}$ denote the content extracted from the \texttt{<answer>} field.
We parse $Y_{(i)}^{a}$ into a predicted emotion set $\hat{\mathcal{Y}}_{(i)} = \{\hat{y}_{(i)}^j\}_{j=1}^{\hat{n}_i}$, with duplicate emotion words removed.

For each emotion wheel $w_k$, we define:
\begin{equation}
\mathrm{Precision}_s^{k} =
\frac{1}{N}\sum_{i=1}^{N}
\frac{\bigl|G_{w_k}(\mathcal{Y}_{(i)}) \cap G_{w_k}(\hat{\mathcal{Y}}_{(i)})\bigr|}
{\bigl|G_{w_k}(\hat{\mathcal{Y}}_{(i)})\bigr|},
\end{equation}

\begin{equation}
\mathrm{Recall}_s^{k} =
\frac{1}{N}\sum_{i=1}^{N}
\frac{\bigl|G_{w_k}(\mathcal{Y}_{(i)}) \cap G_{w_k}(\hat{\mathcal{Y}}_{(i)})\bigr|}
{\bigl|G_{w_k}(\mathcal{Y}_{(i)})\bigr|},
\end{equation}

\begin{equation}
\mathrm{F}_s^{k} =
2 \times
\frac{\mathrm{Precision}_s^{k} \times \mathrm{Recall}_s^{k}}
{\mathrm{Precision}_s^{k} + \mathrm{Recall}_s^{k}}.
\end{equation}

Finally, we compute the average F1-score across all emotion wheels as the final EW score:
\begin{equation}
\mathrm{EW}(\mathcal{Y}_{(i)}, \hat{\mathcal{Y}}_{(i)}) =
\frac{1}{K}\sum_{k=1}^{K} \mathrm{F}_s^{k}.
\end{equation}

\begin{figure}[h]
    \centering
    \setlength{\tabcolsep}{2pt}
    \begin{subcaptionbox}{W1}[0.19\linewidth]
        {\includegraphics[width=0.95\linewidth]{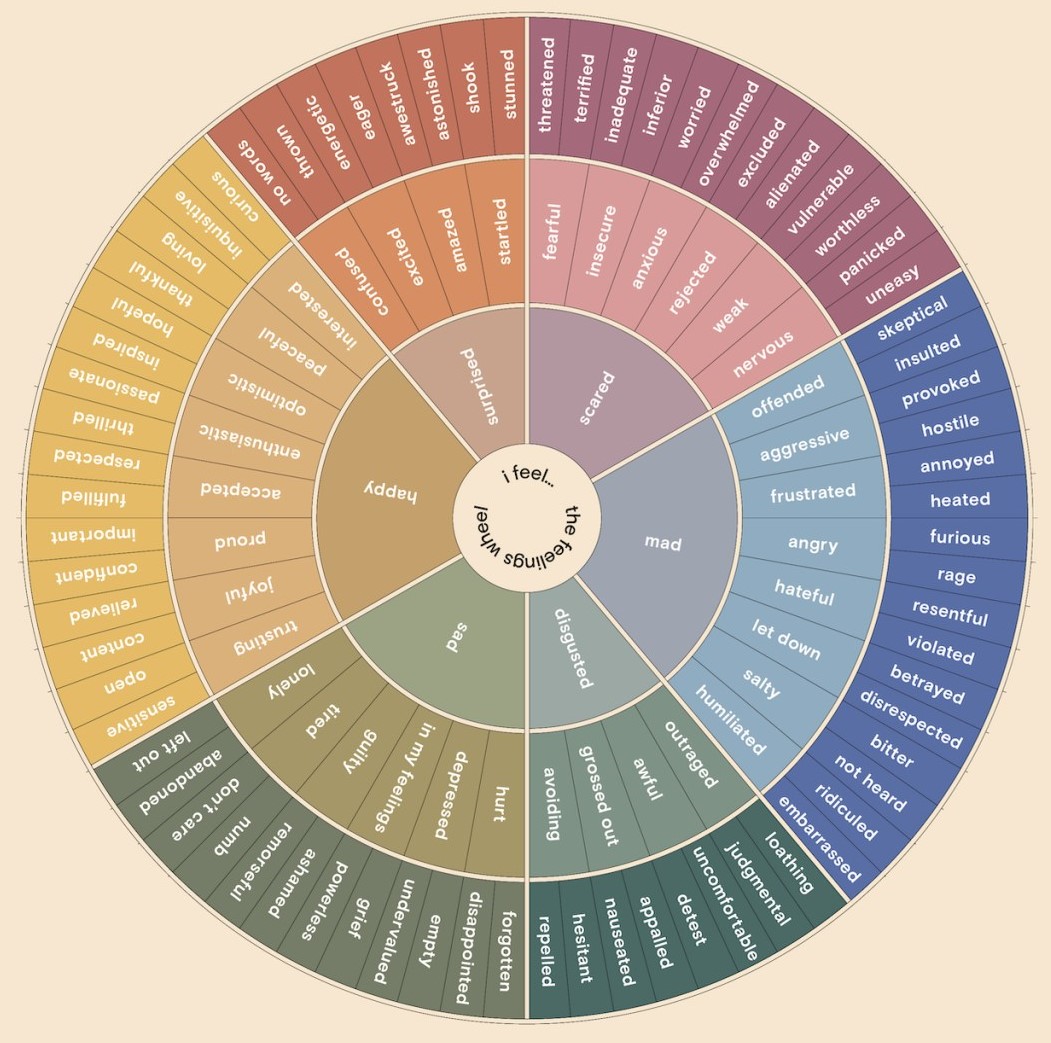}}
    \end{subcaptionbox}
    \begin{subcaptionbox}{W2}[0.19\linewidth]
        {\includegraphics[width=0.95\linewidth]{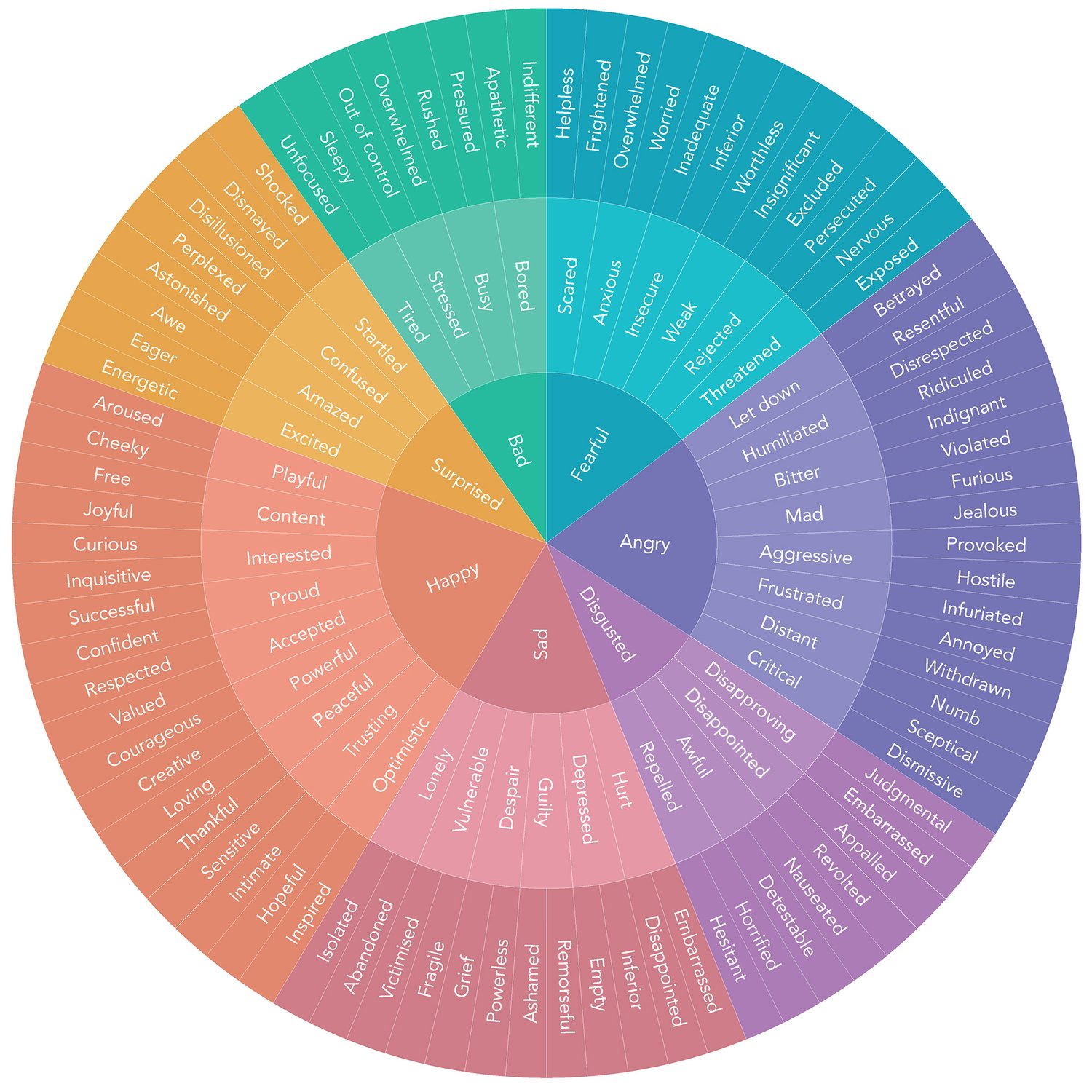}}
    \end{subcaptionbox}
    \begin{subcaptionbox}{W3}[0.19\linewidth]
        {\includegraphics[width=0.95\linewidth]{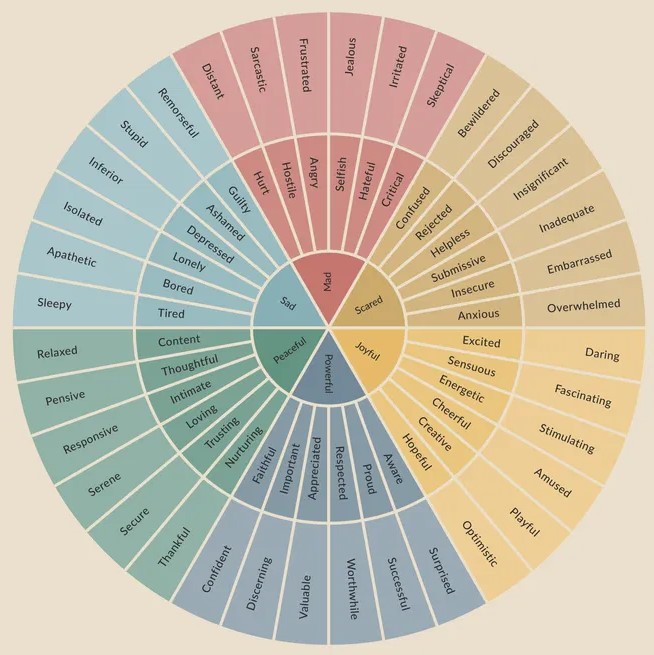}}
    \end{subcaptionbox}
    \begin{subcaptionbox}{W4}[0.19\linewidth]
        {\includegraphics[width=0.95\linewidth]{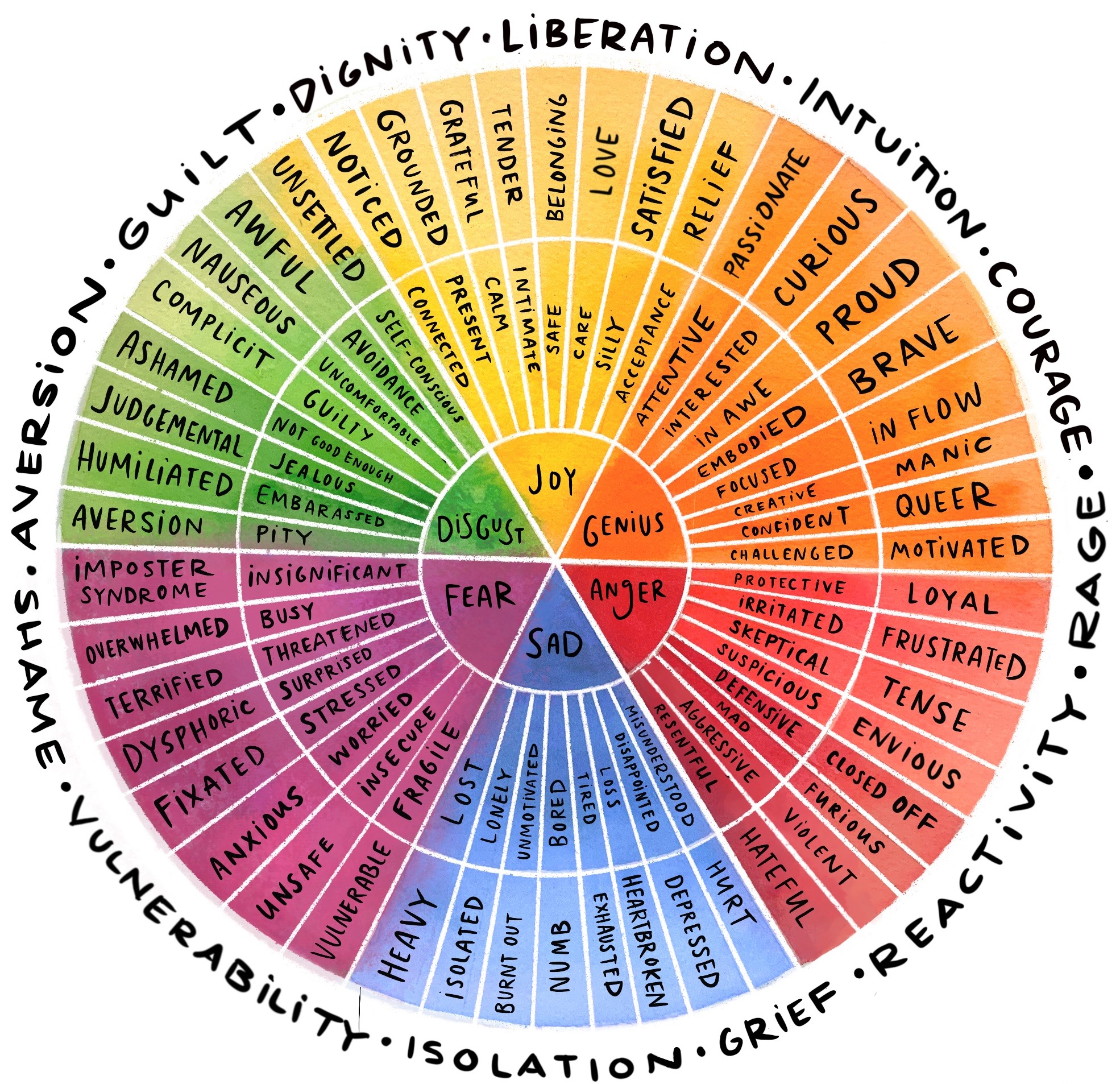}}
    \end{subcaptionbox}
    \begin{subcaptionbox}{W5}[0.19\linewidth]
        {\includegraphics[width=0.95\linewidth]{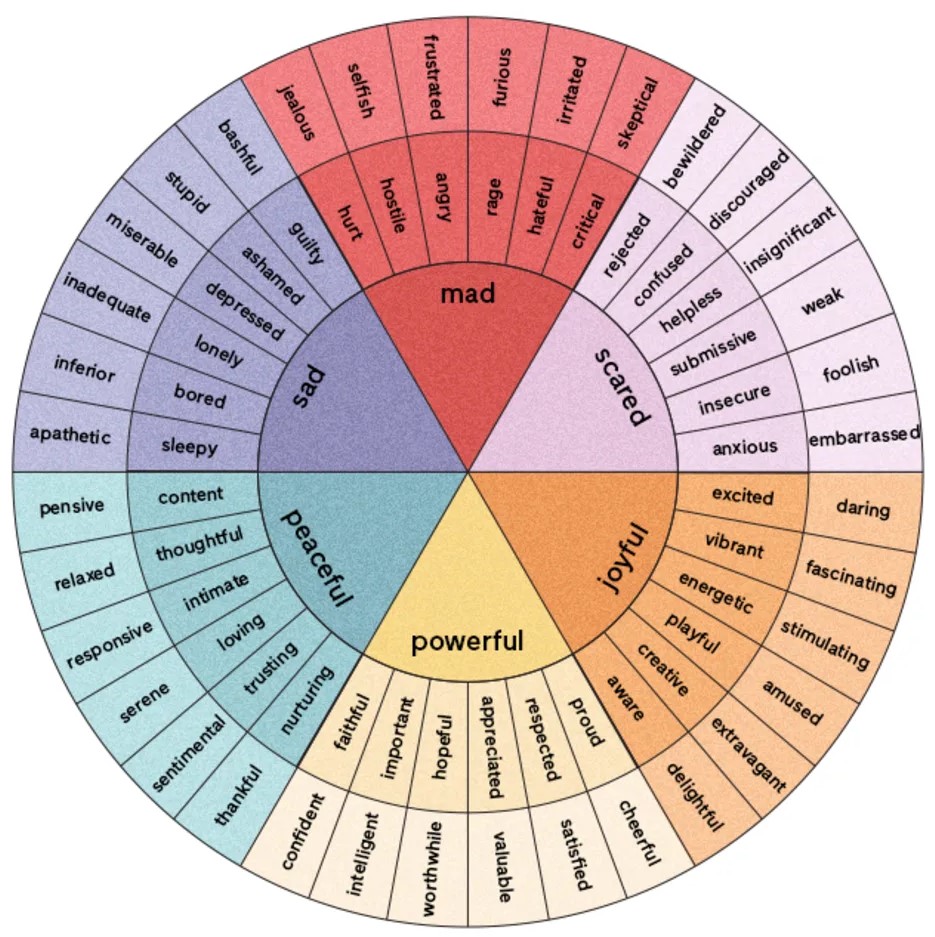}}
    \end{subcaptionbox}
    \caption{\textbf{Emotion wheels.} We adopt five emotion wheels (W1--W5) to compute the EW-based metrics.}
    \label{fig:ew}
\end{figure}

\subsection{Fine-grained Emotion Recognition}
\label{appendix:finegrained-emotion}

Fine-grained Emotion Recognition aims to predict a set of fine-grained emotion descriptors beyond basic categories.
The output space is open-vocabulary, and each sample may correspond to multiple emotion labels.

We evaluate this task on OV-MERD+~\cite{lian2025affectgpt}, which extends OV-MERD~\cite{lian2023explainable}.
OV-MERD provides multimodal inputs with human-annotated fine-grained emotion descriptions, whereas the newly added samples in OV-MERD+ only include the required emotion labels.

\noindent\textbf{Evaluation metric.}
Evaluation follows the MER-UniBench protocol~\cite{lian2025affectgpt} and reports the EW-based set-level F1-score (defined above) as the primary metric.

\subsection{Basic Emotion Recognition}
\label{appendix:basic-emotion}

Basic Emotion Recognition focuses on predicting a single emotion label from a predefined set of basic emotions.
We evaluate this task on MER2023~\cite{lian2023mer}, MER2024~\cite{lian2024mer}, IEMOCAP~\cite{busso2008iemocap}, and MELD~\cite{poria2019meld}.
Each sample is annotated with a single majority-voted basic emotion label.

\noindent\textbf{Evaluation metric.}
Since the model produces free-form emotion words while each dataset provides one basic label, MER-UniBench evaluates this task with the \emph{Hit Rate (HIT)}.
For a sample with ground-truth label $y_{(i)}$ and predicted set $\hat{\mathcal{Y}}_{(i)}$, HIT is defined as:
\begin{equation}
\mathrm{HIT} = \frac{1}{N}\sum_{i=1}^{N}
\mathbb{I}\bigl[G_{w_k}(y_{(i)}) \in G_{w_k}(\hat{\mathcal{Y}}_{(i)})\bigr],
\end{equation}
where $\mathbb{I}[\cdot]$ is the indicator function.
A prediction is counted as correct if the ground-truth basic emotion is covered by the predicted set after EW-based normalization.

\subsection{Sentiment Analysis}
\label{appendix:sentiment-analysis}

Sentiment Analysis aims to predict sentiment polarity.
We evaluate this task on CMU-MOSI~\cite{zadeh2016mosi}, CMU-MOSEI~\cite{zadeh2018multimodal}, CH-SIMS~\cite{yu2020ch}, and CH-SIMS v2~\cite{liu2022make}.
The original annotations are continuous-valued sentiment scores.
Following the MER-UniBench protocol, we binarize labels by mapping scores greater than zero to \emph{positive} and scores less than zero to \emph{negative}.

\noindent\textbf{Evaluation metric.}
We report the Weighted Average F-score (WAF) as the primary metric due to label imbalance, consistent with MER-UniBench.
Following the same multi-step evaluation protocol as AffectGPT~\cite{lian2025affectgpt}, we further convert the predicted emotion words into a sentiment polarity label using Qwen2.5-7B-Instruct.
Specifically, given the extracted emotion words, the model is prompted to select the most likely sentiment category from a fixed candidate set (\eg, \{\emph{positive}, \emph{negative}, \emph{neutral}\}).
The prompt used for this mapping is shown below.

\begin{tcolorbox}[
    colback=blue!3,                 
    colframe=blue!33!black,         
    colbacktitle=blue!9,           
    coltitle=black,                 
    arc=2mm,
    auto outer arc,
    boxrule=0.35mm,
    left=3mm,
    right=3mm,
    top=2mm,
    bottom=2mm,
    width=\textwidth,
    title=\centering \textbf{Sentiment Mapping Prompt}
]
\textit{
Please assume the role of an expert in emotions. We provide a set of emotion words describing a character.
Please choose the most likely sentiment from the candidates: [positive, negative, neutral].
}
\end{tcolorbox}






\section{Implementation and Experiment Details}

\subsection{Details of Task Reward}
\label{sec:task-reward}

Multimodal Emotion Reasoning is formulated as an open-vocabulary generation problem.
Since the label space is not fixed, the model may produce semantically equivalent emotion words with different surface forms, making exact string matching unreliable.
Consistent with AffectGPT-R1~\cite{lian2025affectgptr1}, we define the task reward as an Emotion Wheel (EW)-based score computed between the predicted emotions and the ground-truth labels.
Let $Y_{(i)}$ denote the generated structured sequence, and let $Y_{(i)}^{a}$ be the content extracted from the \texttt{<answer>} field.
Given the ground-truth emotion labels $y_{(i)}$, the task reward is defined as:
\begin{equation}
R_{\mathsf{acc}} \;=\; \mathrm{EW}(Y_{(i)}^{a}, y_{(i)})
\end{equation}
where $\mathrm{EW}(\cdot)$ is the set-level F1 score averaged over multiple emotion wheels, as specified in Appendix~\ref{appendix:ew-metric-full}.
This reward provides a continuous signal to optimize open-vocabulary emotion predictions under synonym and lexical variation.

\subsection{Details of Format Rewards}
\label{sec:format-reward}

To ensure structurally valid outputs, we additionally use a binary format reward $R_{\mathsf{fmt}}$.
Recall that the model is required to generate a reasoning chain enclosed in \texttt{<think>...</think>} followed by the final prediction enclosed in \texttt{<answer>...</answer>}.
We set:
\begin{equation}
R_{\mathsf{fmt}} \;=\;
\begin{cases}
1, & \text{if } Y_{(i)} \text{ contains both required fields with valid tags},\\
0, & \text{otherwise}.
\end{cases}
\end{equation}

This reward only checks structural compliance, and helps stabilize RL training by discouraging generations that omit either the reasoning or the final prediction.

\subsection{Details of Baselines}
\label{sec:task-baseline}

We consider two baseline settings in our experiments:

\noindent\textbf{AffectGPT-R1.}
The first baseline follows AffectGPT-R1~\cite{lian2025affectgptr1}, which applies GRPO-style reinforcement learning on top of a supervised warm-up stage.
We reproduce this baseline on the Qwen2.5-Omni~\cite{xu2025qwen2} backbone by strictly following its official data usage and training recipe: it first performs SFT on the full MER-Caption+ dataset~\cite{lian2025affectgpt}, and then conducts RL on an additional 1k samples from MER2025-OV~\cite{lian2025mer}.

\noindent\textbf{Our \emph{Baseline}.}
The second baseline is constructed by ourselves and differs from AffectGPT-R1 mainly in the SFT/RL data allocation.
Instead of performing SFT on the full MER-Caption+ corpus, we randomly sample 5k instances from MER-Caption+ for SFT, and use all remaining MER-Caption+ samples for the subsequent GRPO-style RL stage.
All other components, including the backbone architecture, reward definitions, and optimization algorithm, are kept identical.

Empirically, we find that allocating more training budget to RL is more effective for MER.
Under the same dataset, increasing the number of RL steps leads to more consistent improvements in end-task performance than further extending the SFT stage.
This motivates an RL-heavy schedule for our baseline, on top of which we later incorporate \ourmethod.

\subsection{Details of Experiment}
\label{appendix:evaluation-setting}

\noindent\textbf{Experiment setting.}
For fairness, we report results using a single final checkpoint for each model across all datasets, rather than selecting dataset-specific checkpoints.
However, we note that some prior work (\eg, AffectGPT~\cite{lian2025affectgpt} and AffectGPT-R1~\cite{lian2025affectgptr1}) reports results under a dataset-specific checkpoint selection protocol, where the best-performing checkpoint is chosen separately for each dataset.
This protocol can lead to overly optimistic estimates because it effectively performs repeated model selection on the test set, which is not aligned with standard machine learning practice that treats test splits as strictly held-out for final evaluation.
In contrast, our unified protocol provides a more faithful measurement of generalization and avoids implicit test-set tuning.

\noindent\textbf{Experiment results.}
To facilitate a fair and direct comparison, Table~\ref{tab:main} includes both: (i) the numbers reported in the original AffectGPT/AffectGPT-R1 papers under their dataset-specific protocol, and (ii) their performance re-evaluated under our unified setting, so that improvements can be compared under the same evaluation conditions.
The impact of this protocol is evident from the gap between AffectGPT and AffectGPT (Last Checkpoint), where per-dataset selection yields consistently higher numbers. In contrast, we evaluate using a single final checkpoint across all datasets. Under this unified setting, \ourmethod~ outperforms other models on each dataset. Moreover, even when compared to the dataset-specific results reported for AffectGPT-R1, \ourmethod~ still achieves a higher mean score, improving from 79.98\% to 81.05\%. These results suggest that our gains are not attributable to checkpoint selection, but reflect consistently strong performance across diverse datasets.

\section{Additional Experimental Analysis}
\label{app:additional_analysis}

\noindent
\begin{minipage}[t]{0.56\textwidth}
\textbf{Comparison with LLM-as-a-judge.}
We compare our embedding-based reward with LLM-as-a-judge alternatives.
Unlike LLM judges that provide only detached scalar scores, our reward is tightly coupled with Omni-Perception Loss and supports token-level evidence routing.
This is critical for suppressing cross-modal hallucination, since a better-looking reasoning chain may still rely on hallucinated rather than modality-grounded cues.
Our method instead computes a continuous reward while constructing the Evidence-Routing Matrix for the KL penalty, enabling faithful modality-specific optimization.
\end{minipage}
\hfill
\begin{minipage}[t]{0.40\textwidth}
\vspace{-2mm}
\centering
\fontsize{8}{10}\selectfont
\renewcommand{\arraystretch}{1.25}
\setlength{\tabcolsep}{4.0pt}

\captionof{table}{\textbf{Comparison with LLM-as-a-judge rewards.}
We report downstream performance and reward computation time per sample.}
\label{tab:llm_judge_comparison}

\begin{tabular}{l c c c c c}
    \toprule
    \textbf{Reward} & \textbf{Senti.} & \textbf{Basic} & \textbf{Fine} & \textbf{Mean} & \textbf{Time(s)} \\
    \midrule
    Coarse Judge 
    & \textbf{86.17} & 71.95 & \textbf{67.58} & 77.75 & 0.15 \\
    Fine Judge 
    & 84.97 & 74.63 & 66.14 & 78.28 & 0.35 \\
    \rowcolor{tabhighlight}
    Ours 
    & 85.83 & \textbf{76.18} & 67.09 & \textbf{79.45} & \textbf{0.07} \\
    \bottomrule
\end{tabular}
\end{minipage}

Second, embedding similarity provides a more stable optimization signal.
Recent studies have shown that LLM evaluators may suffer from rating indeterminacy and high variance, which can introduce instability during reward-based training.
Our reward is deterministic, continuous, and directly tied to semantic alignment, avoiding the sampling and calibration uncertainty of LLM judges.

Third, our reward is empirically more effective and computationally more efficient.
We conduct ablations with two LLM-as-a-judge variants.
Both use Qwen3-Instruct-14B as the evaluator, deployed on two H100 GPUs with vLLM.
The coarse-grained judge scores the whole reasoning chain from 1 to 10, while the fine-grained judge verifies each extracted cue individually.
We train model variants using only these judge rewards.
As shown in \cref{tab:llm_judge_comparison}, our Omni-Perception Reward achieves the best overall mean score while requiring the least reward computation time per sample.
These results confirm that the proposed embedding-based reward is not only more aligned with our token-level faithfulness objective, but also more suitable for efficient reward optimization.

\section{Prompt for Multimodal Evidence Extraction}
\begin{tcolorbox}[
    colback=blue!3,
    colframe=blue!33!black,
    colbacktitle=blue!9,
    coltitle=black,
    arc=2mm,
    auto outer arc,
    boxrule=0.35mm,
    left=3mm,
    right=3mm,
    top=2mm,
    bottom=2mm,
    width=\textwidth,
    title=\centering \textbf{Multimodal Evidence Extraction Prompt}
]
You are an expert linguistic annotator for a Multimodal Emotion Recognition dataset.

\textbf{Task:} Analyze the provided `Reasoning Text' (the content inside $<$think$>$ tags). Your goal is to extract the \textbf{multimodal evidence} the model used to reach the conclusion.

\textbf{Extraction Rules:}
\begin{enumerate}
    \item \textbf{Visual Cues:} Extract specific phrases describing facial expressions, body language, or scene details mentioned in the text.
    \begin{itemize}
        \item Examples: ``eyes closed'', ``serious expression'', ``looking down'', ``furrowed brows'', ``clenched fists''
        \item Keywords to look for: ``expression'', ``face'', ``body'', ``posture'', ``gesture'', ``looking'', ``eyes'', ``video clues''
    \end{itemize}

    \item \textbf{Audio Cues:} Extract specific phrases describing voice quality, tone, speed, or volume.
    \begin{itemize}
        \item Examples: ``shaky voice'', ``aggressive tone'', ``fast speaking speed'', ``trembling voice'', ``loud volume''
        \item Keywords to look for: ``voice'', ``tone'', ``audio'', ``sound'', ``speaking'', ``speech''
    \end{itemize}

\end{enumerate}

\textbf{Output Format (JSON):} \\
\{ \\
\hspace*{1em} "visual\_cues": ["phrase 1", "phrase 2"], \\
\hspace*{1em} "audio\_cues": ["phrase 1", "phrase 2"], \\
\}

\vspace{0.2cm}
\textbf{Important:}
\begin{itemize}
    \item Extract short, semantically complete phrases (2--6 words), not full sentences.
    \item Do not hallucinate. Only extract what is explicitly written in the input text.
    \item If a category is missing in the text, leave the list empty.
\end{itemize}
\end{tcolorbox}

\section{Qualitative Analysis}
\label{sec:qualitative-analysis}
In this section, we provide extensive visualizations of reasoning trajectories to demonstrate the superiority of \ourmethod~ over the baseline model across utilization, faithfulness, and three emotion understanding tasks. Compared to the baselines, \ourmethod~ captures richer multimodal cues and provides strictly grounded reasoning, which directly translates into superior performance in complex emotion recognition scenarios.

\noindent\textbf{Utilization.} 
Table \ref{tab:utilization_sample1} depicts a woman in a joyful state; however, AffectGPT-R1 suffers from ``thinking inertia'' by focusing exclusively on audio cues and overlooking crucial happy facial expressions. By failing to capture any key multimodal cues, it yields an incorrect prediction of ``anxious'', whereas \ourmethod~ successfully identifies the woman's smile and relaxed posture, resulting in an accurate emotional assessment.
Table \ref{tab:utilization_sample2} presents a man in a happy state, where AffectGPT-R1 again fails to anchor its reasoning in the correct evidence, providing only a generic ``positive'' prediction. In contrast, our model effectively retrieves specific cues such as ``laughter'' and ``active gestures'' to produce a precise ``happy'' prediction.

\noindent\textbf{Faithfulness on audio.} 
Table \ref{tab:faithfulness_a_sample1} depicts a woman in a state of sadness. AffectGPT-R1 provides an incorrect answer to the probe question; due to severe spurious correlation hallucinations during training, the model not only answers ``Yes'' but also offers a detailed yet false explanation. It is evident that the model hallucinates audio information based on visual cues even when the audio is masked. Furthermore, under this stress test, the model labels the woman as sad in the probe response while its original reasoning chain identifies her facial expression as neutral, demonstrating internal inconsistency. In contrast, \ourmethod~ provides a clear, concise, and correct answer.
Table \ref{tab:faithfulness_a_sample2} features a video with multiple characters where the primary speaker is a young girl. AffectGPT-R1 misidentifies the subject, leading to incorrect reasoning, predictions, and faithfulness test results, while continuing to exhibit hallucinations of auditory details based on visual information. Conversely, our model identifies the correct subject, providing accurate emotion predictions and a hallucination-free response.

\noindent\textbf{Faithfulness on video.} 
Table \ref{tab:faithfulness_v_sample1} depicts a woman in a state of surprise and joy. While AffectGPT-R1 provides near-accurate reasoning in its initial trajectory, it produces a contradictory response to the probe question, hallucinating that the individual in the video is experiencing ``anger, frustration, or anxiety'' due to audio-driven hallucination. Our \ourmethod~ provides the correct answer without any such hallucination.
Table \ref{tab:faithfulness_v_sample2} features an angry girl; however, AffectGPT-R1 fails to utilize actual visual cues during its reasoning. It only associates the aggressive audio with an angry facial expression during the probe question, despite the girl's expression being restrained and neutral in reality. In contrast, \ourmethod~ objectively reflects these visual details in its reasoning process and correctly denies the hallucinated behavior.

\noindent\textbf{Fine-grained emotion recognition.} 
Table \ref{tab:fine_sample1} depicts a relaxed and happy individual instructing a subordinate. AffectGPT-R1 overlooks the person's facial expressions and erroneously predicts ``anxious'' based solely on audio cues. In contrast, \ourmethod~ successfully retrieves visual cues such as the ``relaxed posture and smile'' to correctly predict ``relaxed'' and ``happy''. 
Table \ref{tab:fine_sample2} primarily features an angry individual. While AffectGPT-R1 only predicts emotions related to ``frustration'', our model accurately identifies the specific emotion of ``anger''. Although both anger and frustration are negative emotions, they are distinct; these results demonstrate that \ourmethod~ is more precise in  fine-grained emotion understanding.

\noindent\textbf{Basic emotion recognition.} 
Table \ref{tab:basic_sample1} depicts a woman teasing her friend in a happy state. However, AffectGPT-R1 relies solely on the verbal content of the dialogue, leading to an incorrect prediction of ``anxious''. Our model, \ourmethod, correctly analyzes the multimodal cues to successfully predict ``happiness''.
Table \ref{tab:basic_sample2} showcases a sad man. AffectGPT-R1 erroneously interprets the mockery from others as the subject's own smile while overlooking his actual sad facial expression and tone, resulting in a false prediction of ``happy''. In contrast, \ourmethod~ provides the correct emotional assessment of ``frustration'' by objectively identifying the man's dissatisfaction.

\noindent\textbf{Sentiment analysis.} 
Table \ref{tab:sentiment_sample1} showcases a confident woman delivering a presentation with positive sentiment. AffectGPT-R1 erroneously interprets her urgent vocal delivery as a sign of ``anxious'' emotion. In contrast, \ourmethod~ successfully recognizes through her facial expressions and body language that she is feeling comfortable and relaxed in her environment, leading to a correct prediction of ``positive'' sentiment. 
Table \ref{tab:sentiment_sample2} features a man exhibiting positive but subtle sentiment. AffectGPT-R1 fails to capture the fleeting micro-expressions and is misled by the subject's cautious vocal tone, resulting in a conservative ``neutral'' prediction. In contrast, \ourmethod~ demonstrates superior sensitivity by successfully detecting these minute visual details, correctly identifying the positive sentiment.

\begin{table*}[ht]
\begin{minipage}{0.98\textwidth}
\caption{First example of utilization comparing AffectGPT-R1 with \ourmethod.}
\centering  
\scalebox{0.80}{
    \begin{tabular}{l|p{16cm} }
    \toprule
    \multicolumn{2}{l}{\bf Example of Utilization} \\
    \midrule
    & \raisebox{-0.9\height}{\includegraphics[height=3cm]{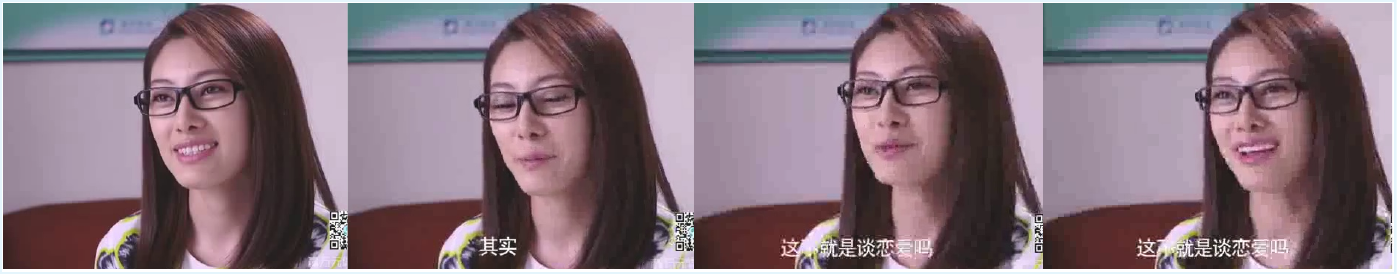}} \\
    Subtitle (En) & {Actually, isn't this just dating?} \\
    \midrule
    
    Ground Truth & Labels: shy, relaxed, hesitant, joyful, friendly, happy 
    
    GT Evidence: slight smile on her face, mouth slightly upturned, eyes looking directly at the object, gaze moves downwards, pursing of the lips, gaze returns to the other person, smile becomes more apparent, soft and the tone is steady, longer pause between `actually' and `isn't this just dating?' \\

    \midrule

    AffectGPT-R1 & \texttt{<think>}In the text, the caption reads: ``Actually, isn't this dating?" This phrase might be a self-assessment or reaction from a woman to herself or someone nearby. Given the audio cues that include a personal voice filled with tension and anxiety, a high pitch, a rapid pace, and a slight tremble, we can infer that the tone of the statement might convey a sense of tension and excitement. Therefore, this sentence could be the woman expressing her excitement and nervousness about her own or someone else's romantic experience.\texttt{</think>} 
    
    \texttt{<answer>}excited, nervous, tense, anxious\texttt{</answer>} 
    
    Retrieved Evidence: \textit{None} \\
    \midrule
    
    \ourmethod & \texttt{<think>}In the text, the subtitle content says, ``Actually, isn't this dating?" Based on the facial expression, body movements, tone, and subtitle content in the video, we can infer that the woman is experiencing a positive emotional state, possibly happiness or excitement. The woman's smile, relaxed posture, and the presence of the window in the background suggest that she is in a comfortable and pleasant environment, which further supports her positive emotions. The tone of the speaker in the audio is calm and steady, without any obvious signs of anxiety or nervousness. Based on the subtitle content, we can infer that the woman is talking about dating, which may be a topic she is excited about or looking forward to. Therefore, based on the facial expressions, body movements, tone, subtitle content, etc., in the video, we can infer that the woman is experiencing a positive emotional state, possibly happiness or excitement.\texttt{</think>} 
    
    \texttt{<answer>}happiness, excitement\texttt{</answer>} 
    
    Retrieved Evidence: woman's smile, relaxed posture, comfortable and pleasant environment, calm and steady tone \\
    \midrule
    \end{tabular}
}
\label{tab:utilization_sample1}
\end{minipage}
\end{table*}
\begin{table*}[ht]
\begin{minipage}{0.98\textwidth}
\caption{Second example of utilization comparing AffectGPT-R1 with \ourmethod.}
\centering  
\scalebox{0.80}{
    \begin{tabular}{l|p{16cm} }
    \toprule
    \multicolumn{2}{l}{\bf Example of Utilization} \\
    \midrule
    & \raisebox{-0.9\height}{\includegraphics[height=3cm]{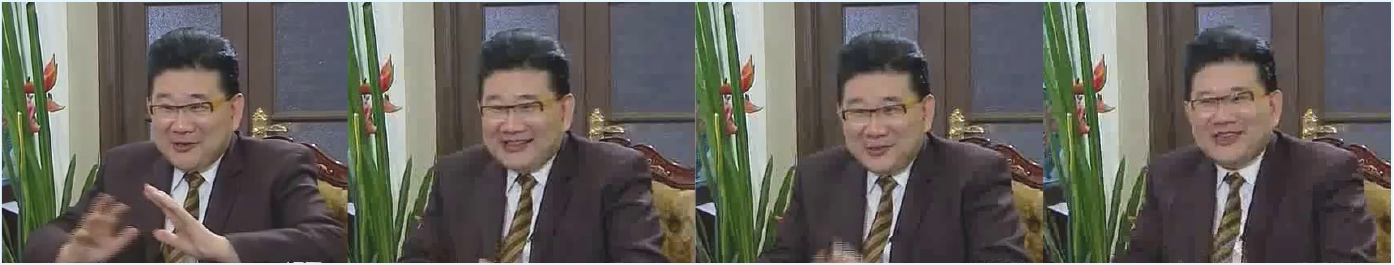}} \\
    Subtitle (En) & {Expand for her, take on work, and promote her.} \\
    \midrule
    
    Ground Truth & Labels: happy, relaxed, hesitant, joyful, friendly, shy
    
    GT Evidence: slight smile, eyes are slightly squinted, active gestures, obvious laughter \\

    \midrule

    AffectGPT-R1 & \texttt{<think>}In the text, the caption reads: ``This is about taking on jobs for her promotion." This sentence could be a man's discussion on a particular topic or making suggestions. Given the speaker's high-pitched, clear and concise tone, with a fast pace and energetic voice, we can infer that this sentence might be conveyed with an excited or passionate tone. Therefore, this sentence could be a man actively proposing ideas or suggestions, consistent with his overall positive mood.\texttt{</think>} 
    
    \texttt{<answer>}excited, passionate, positive\texttt{</answer>} 
    
    Retrieved Evidence: \textit{None} \\
    \midrule
    
    \ourmethod & \texttt{<think>}In the text, based on the facial expressions, body movements, and gestures of the man in the video, as well as the tone, pitch, and rhythm of the individual's voice in the audio clues, we can infer that this person is experiencing a positive emotional state, possibly feeling happy, excited, or passionate. The man's smile, laughter, and active gestures indicate that he is in a good mood and is enjoying himself. The audio clues suggest that the individual may be experiencing intense emotions, possibly due to excitement, joy, or passion. Therefore, this phrase ``activate this promotion" could be the man's expression of enthusiasm and excitement about a certain activity or event.\texttt{</think>} 
    
    \texttt{<answer>}positive, happy, excited, passionate\texttt{</answer>} 
    
    Retrieved Evidence: man's smile, laughter, active gestures \\
    \midrule
    \end{tabular}
}
\label{tab:utilization_sample2}
\end{minipage}
\end{table*}
\begin{table*}[ht]
\begin{minipage}{0.98\textwidth}
\caption{First example of audio faithfulness comparing AffectGPT-R1 with \ourmethod.}
\centering  
\scalebox{0.80}{
    \begin{tabular}{l|p{16cm} }
    \toprule
    \multicolumn{2}{l}{\bf Example of Audio Faithfulness} \\
    \midrule
    & \raisebox{-0.9\height}{\includegraphics[height=3cm]{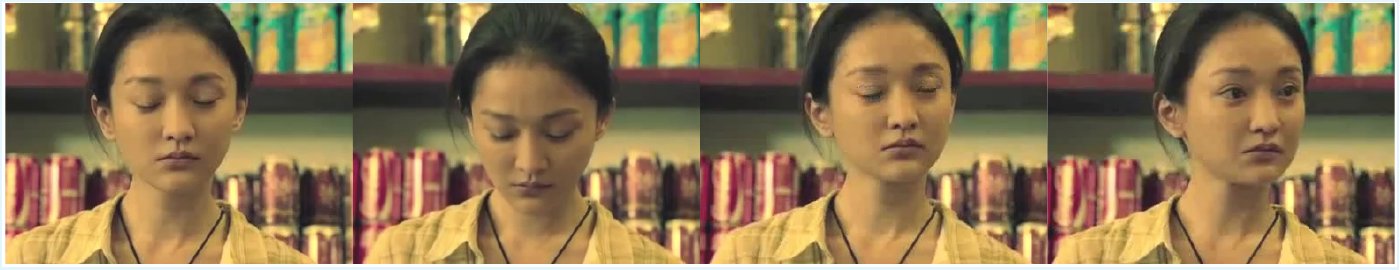}} \\
    Subtitle (En) & {How can I manage this? That seat is where my husband was eating!} \\
    \midrule
    
    Ground Truth & Labels: questioning, angry, sad  \\

    \midrule

    AffectGPT-R1 & \texttt{<think>}In the text, caption content: ``How do you calculate it, the person eating at that place is my husband." This sentence might be the woman expressing her confusion and surprise. Given the audio clues where describe the female voice as filled with frustration, annoyance, and impatience, and the video clues where the woman's facial expression is neutral, no signs of pain, and relaxed body language, we can infer that the woman may feel puzzled and astonished. She might not realize that the person eating at that place is her husband, which could lead to confusion and surprise. Therefore, this sentence expresses the woman's confusion and surprise.\texttt{</think>} 
    
    \texttt{<answer>}confused, surprised, frustrated, annoyed, impatient\texttt{</answer>} 
    
    Probe Question: Does the audio contain hint of crying?

    Model Answer: Yes, the audio contains hints of crying. The woman's expression and body language in the video suggest that she is experiencing a range of emotions, including sadness. \\
    \midrule
    
    \ourmethod & \texttt{<think>}In the text, the subtitle reads: ``How can I do this? The place where my husband used to sit and eat." Based on the facial expression, body language, and background of the woman in the video, as well as the trembling voice, fast-paced speech, and urgency in the audio, we can infer that this sentence may carry an emotion of sadness, frustration, or anxiety. The woman's facial expression and body language suggest that she may be experiencing a negative emotion, while the trembling voice, fast-paced speech, and urgency in the audio indicate that she may be feeling anxious or frustrated. Therefore, this sentence may be the woman's confusion, disappointment, or dissatisfaction regarding her husband's behavior or change.\texttt{</think>} 
    
    \texttt{<answer>}sadness, frustration, anxiety\texttt{</answer>} 
    
    Probe Question: Does the audio contain hint of crying?

    Model Answer: No. \\
    \midrule
    \end{tabular}
}
\label{tab:faithfulness_a_sample1}
\end{minipage}
\end{table*}
\begin{table*}[ht]
\begin{minipage}{0.98\textwidth}
\caption{Second example of audio faithfulness comparing AffectGPT-R1 with \ourmethod.}
\centering  
\scalebox{0.80}{
    \begin{tabular}{l|p{16cm} }
    \toprule
    \multicolumn{2}{l}{\bf Example of Audio Faithfulness} \\
    \midrule
    & \raisebox{-0.9\height}{\includegraphics[height=3cm]{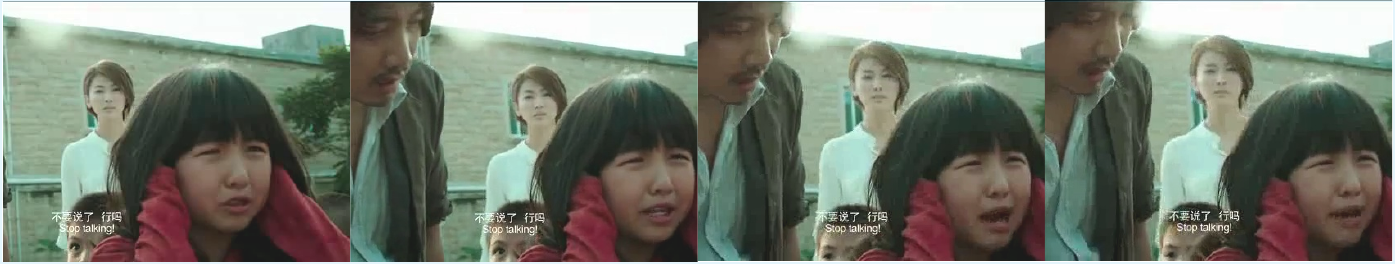}} \\
    Subtitle (En) & {Stop talking!} \\
    \midrule
    
    Ground Truth & Labels: pressure, breakdown, anxiety, pain  \\

    \midrule

    AffectGPT-R1 & \texttt{<think>}In the text, the caption reads: ``Don't say anything, Qiao Ma." This phrase could be an instruction or request to a woman by a man. Based on the audio clues described as a high-pitched scream and the video clues showing the woman crying and the man concerned, we can infer that this sentence might be the man's response to the woman's distress. The man may wish for the woman to stop expressing her emotions so as to alleviate her discomfort or distress. This sentence might be the man's response to the woman's distress, indicating his care and concern for her.\texttt{</think>} 
    
    \texttt{<answer>}concerned, caring, anxious\texttt{</answer>} 
    
    Probe Question: Does the audio contain high volume?

    Model Answer: Yes. The audio contains high volume. It seems to be an anxious, frustrated, or aggressive tone. This alignss with the video clues where the woman is trying to comfort the crying girl. \\
    \midrule
    
    \ourmethod & \texttt{<think>}In the text, based on the facial expressions, body language, and surrounding environment of the girl in the video clues, it can be inferred that she is experiencing a negative emotion, possibly due to crying or distress. This emotional state alignss with the content of the subtitles where the individual is shouting loudly, possibly experiencing strong negative emotions such as anger, frustration, or sadness. Therefore, it can be inferred that the individual in the subtitles is expressing dissatisfaction, anger, or frustration towards the girl.\texttt{</think>} 
    
    \texttt{<answer>}anger, frustration, sadness\texttt{</answer>} 
    
    Probe Question: Does the audio contain high volume?

    Model Answer: No. \\
    \midrule
    \end{tabular}
}
\label{tab:faithfulness_a_sample2}
\end{minipage}
\end{table*}
\begin{table*}[ht]
\begin{minipage}{0.98\textwidth}
\caption{First example of video faithfulness comparing AffectGPT-R1 with \ourmethod.}
\centering  
\scalebox{0.80}{
    \begin{tabular}{l|p{16cm} }
    \toprule
    \multicolumn{2}{l}{\bf Example of Video Faithfulness} \\
    \midrule
    & \raisebox{-0.9\height}{\includegraphics[height=3cm]{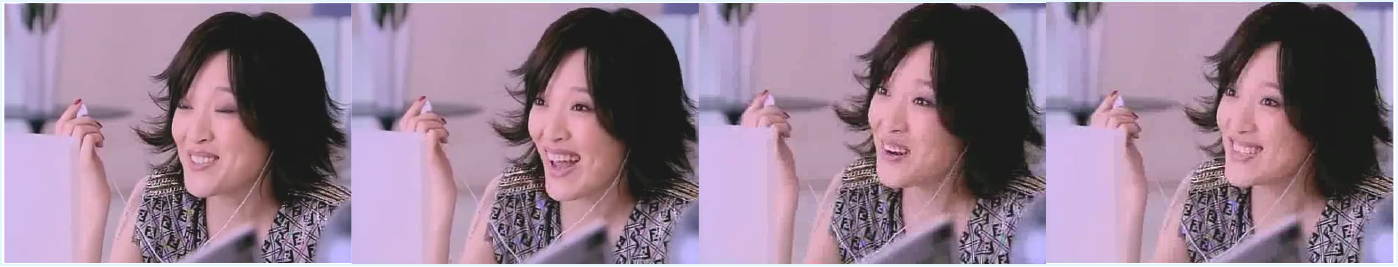}} \\
    Subtitle (En) & {Director Bai, you're also paying attention to this matter!} \\
    \midrule
    
    Ground Truth & Labels: surprised, relaxed, positive, joyful, pleasant, happy  \\

    \midrule

    AffectGPT-R1 & \texttt{<think>}In the text, the caption reads: ``Ms. White, you're also concerned about this." This sentence could be someone addressing Ms. White, either in inquiry or evaluation. Given the personal tone described as high and the fast pace, along with the woman's smile and positive mood depicted in the video clues, we can infer that this statement might carry an element of excitement or enthusiasm.\texttt{</think>} 
    
    \texttt{<answer>}excitement, enthusiasm, appreciation\texttt{</answer>} 
    
    Probe Question: Does the video show expression becoming more exaggerated?

    Model Answer: Yes, the person's expression in the video is becoming more exaggerated, possibly due to heightened emotions such as anger, frustration, or anxiety. The tone of the voice in the audio is also becoming more intense, suggesting that the individual might be experiencing. \\
    \midrule
    
    \ourmethod & \texttt{<think>}In the text, the caption reads: ``Ms. Bai, you're also interested in this." Based on the facial expressions, body language, and smile of the woman in the video clues, as well as the high-pitched tone of the female voice in the audio clues, we can infer that this sentence might be a response or comment from the woman towards the Director. Given the overall positive mood and joyful demeanor of the woman, it's likely that the tone of this sentence could be one of excitement or anticipation. Therefore, this sentence might be the woman expressing her interest and excitement about a particular matter, which alignss with her overall positive emotional state.\texttt{</think>} 
    
    \texttt{<answer>}excited, anticipatory,joyful, positive\texttt{</answer>} 
    
    Probe Question: Does the video show expression becoming more exaggerated?

    Model Answer: No. \\
    \midrule
    \end{tabular}
}
\label{tab:faithfulness_v_sample1}
\end{minipage}
\end{table*}
\begin{table*}[ht]
\begin{minipage}{0.98\textwidth}
\caption{Second example of video faithfulness comparing AffectGPT-R1 with \ourmethod.}
\centering  
\scalebox{0.80}{
    \begin{tabular}{l|p{16cm} }
    \toprule
    \multicolumn{2}{l}{\bf Example of Video Faithfulness} \\
    \midrule
    & \raisebox{-0.9\height}{\includegraphics[height=3cm]{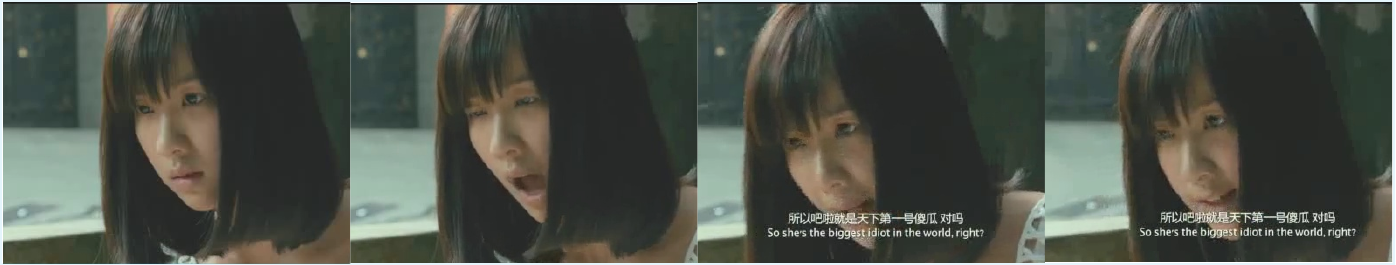}} \\
    Subtitle (En) & {So Bala is the number one idiot in the world, right?} \\
    \midrule
    
    Ground Truth & Labels: excited, angry  \\

    \midrule

    AffectGPT-R1 & \texttt{<think>}In the text, the caption reads: ``So, is Bala the number one fool in the world?" This sentence could be an evaluation or reaction by an individual towards someone or something. Given the audio cues that suggest the individual's voice is filled with anger and frustration, the tone is aggressive and confrontational, and the individual seems to be expressing a strong sense of power and dominance, we can infer that this sentence carries a negative judgment or an aggressive tone. The individual might be profoundly dissatisfied with someone or some action, leading to a strong reaction. Therefore, this sentence conveys the individual's anger and frustration.\texttt{</think>} 
    
    \texttt{<answer>}anger, frustration\texttt{</answer>} 
    
    Probe Question: Does the video show opens her mouth wide?

    Model Answer: Yes, the woman's expression in the video is filled with anger and frustration, and her wide open mouth may be a strong reaction to a certain situation or event. At the same time, the audio clues describe the individual's voice as filled with anger. \\
    \midrule
    
    \ourmethod & \texttt{<think>}In the text, the subtitle reads: ``So Bala is the number one fool in the world, right?" Based on the description of the individual's high-pitched and agitated voice in the audio clues, as well as the woman's neutral expression and lack of obvious emotional cues in the video clues, we can infer that this sentence may carry a tone of anger or frustration. The individual may be expressing dissatisfaction or disagreement with someone or something. Therefore, this sentence may be a negative evaluation or accusation, aligning with the overall negative emotion displayed by the individual.\texttt{</think>} 
    
    \texttt{<answer>}anger, frustration\texttt{</answer>} 
    
    Probe Question: Does the video show opens her mouth wide?

    Model Answer: No. \\
    \midrule
    \end{tabular}
}
\label{tab:faithfulness_v_sample2}
\end{minipage}
\end{table*}

\begin{table*}[ht]
\begin{minipage}{0.98\textwidth}
\caption{First example of fine-grained emotion reasoning comparing AffectGPT-R1 with \ourmethod.}
\centering  
\scalebox{0.80}{
    \begin{tabular}{l|p{16cm} }
\toprule
\multicolumn{2}{l}{\bf Example of Fine-grained Emotion Reasoning} \\
\midrule
& {\includegraphics[height=3cm]{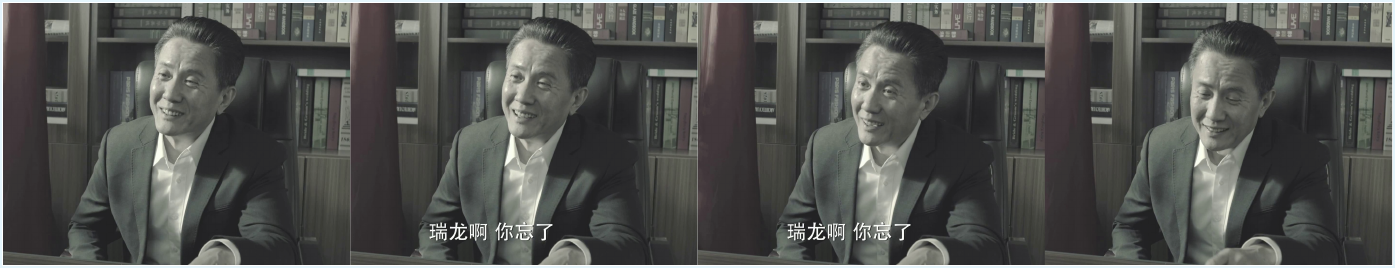}} \\
Subtitle (En) & {Li Lang, you forgot.} \\
\midrule
Ground Truth & {emphasize, easy, joyful, friendly, happy, relax}  \\
\midrule
AffectGPT-R1 & {\texttt{<think>}In the text, the caption reads: ``Lilang, you forgot." This sentence could be a reminder or inquiry from a man to someone or about something. Given the personal voice described in the audio cues as shaking, with a slow and deliberate pace, and considering the tone of the sentence, we can infer that this sentence might carry a sense of tension or anxiety. Therefore, it might be a reminder or inquiry from a man in a formal setting, consistent with his overall demeanor of confidence and self-assurance.\texttt{</think>}

\texttt{<answer>}confident, self-assured, tense, anxious\texttt{</answer>}}  \\
\midrule
\ourmethod & {\texttt{<think>}In the text, the subtitle content ``Li Lang, you forgot" based on the facial expressions, body movements, tone, subtitle content, etc., in the video and audio clues, we can infer that this sentence may be a man's reminder or reminder to someone. Based on the man's relaxed posture and smile, as well as the comfortable and professional environment in the video, we can infer that this sentence may have a light-hearted or humorous tone. At the same time, based on the slight tremor in the individual's voice, slow and purposeful speech in the audio, we can infer that this sentence may carry a sense of urgency or importance. Therefore, this sentence may be the man reminding someone not to forget something important or reminding them to pay attention to something.\texttt{</think>}

\texttt{<answer>}relaxed, happy, humorous, urgent, important\texttt{</answer>}}  \\
\midrule
\end{tabular}
}
\label{tab:fine_sample1}
\end{minipage}
\end{table*}

\begin{table*}[ht]
\begin{minipage}{0.98\textwidth}
\caption{Second example of fine-grained emotion reasoning comparing AffectGPT-R1 with \ourmethod.}
\centering  
\scalebox{0.80}{
    \begin{tabular}{l|p{16cm} }
\toprule
\multicolumn{2}{l}{\bf Example of Fine-grained Emotion Reasoning} \\
\midrule
& {\includegraphics[height=3cm]{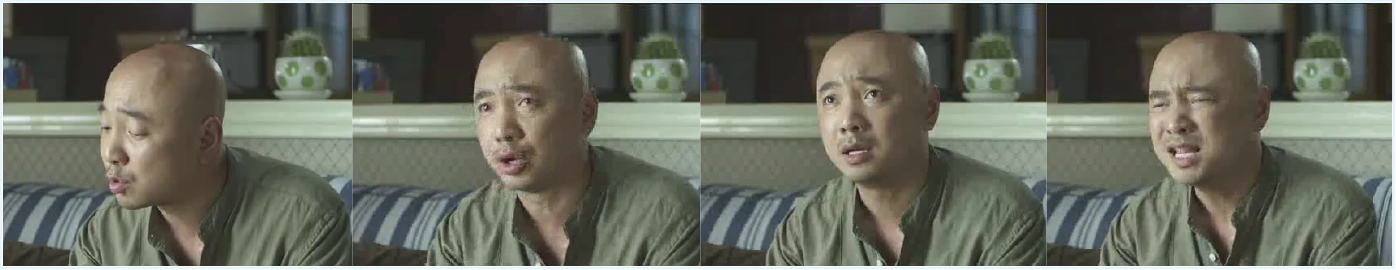}} \\
Subtitle (En) & {Gao Jianling turned back to look for her, but she refused. So what else can she do?} \\
\midrule
Ground Truth & {angry, accuse}  \\
\midrule
AffectGPT-R1 & {\texttt{<think>}In the text, the subtitle content ``Apologies, turn around to find her, she refused." Based on the video clues of the man's serious expression and body language, as well as the audio clues of the individual's voice trembling slightly and tone of uncertainty or stress, it can be inferred that this sentence may be the man expressing a sense of frustration, helplessness, or refusal. He may be dealing with a situation that makes him feel uncomfortable or unwilling, leading to a state of anxiety and pressure. This sentence may be him expressing his frustration or helplessness in dealing with this situation.\texttt{</think>}

\texttt{<answer>}frustration, helplessness, refusal, anxiety, pressure\texttt{</answer>}}  \\
\midrule
\ourmethod & {\texttt{<think>}In the text, caption content: ``Gao Jianling turned back to find her, but she refused." Based on the facial expressions, body language, and posture of the man in the video clues, as well as the tone, pitch, and rhythm of the individual's voice in the audio clues, we can infer that this sentence may carry an emotion of disappointment, anger, or frustration. The man's facial expressions, body language, and posture indicate that he may be experiencing some negative emotions, while the individual's voice in the audio clues has a high pitch and fast pace, suggesting that they may be experiencing anxiety or stress. Therefore, this sentence may be a response to a refusal or rejection, expressing the individual's disappointment, anger, or frustration towards Gao Jianling.\texttt{</think>}

\texttt{<answer>}disappointment, anger, frustration\texttt{</answer>}}  \\
\midrule
\end{tabular}
}
\label{tab:fine_sample2}
\end{minipage}
\end{table*}

\begin{table*}[ht]
\begin{minipage}{0.98\textwidth}
\caption{First example of basic emotion reasoning comparing AffectGPT-R1 with \ourmethod.}
\centering  
\scalebox{0.80}{
    \begin{tabular}{l|p{16cm} }
\toprule
\multicolumn{2}{l}{\bf Example of Basic Emotion Reasoning} \\
\midrule
& {\includegraphics[height=3cm]{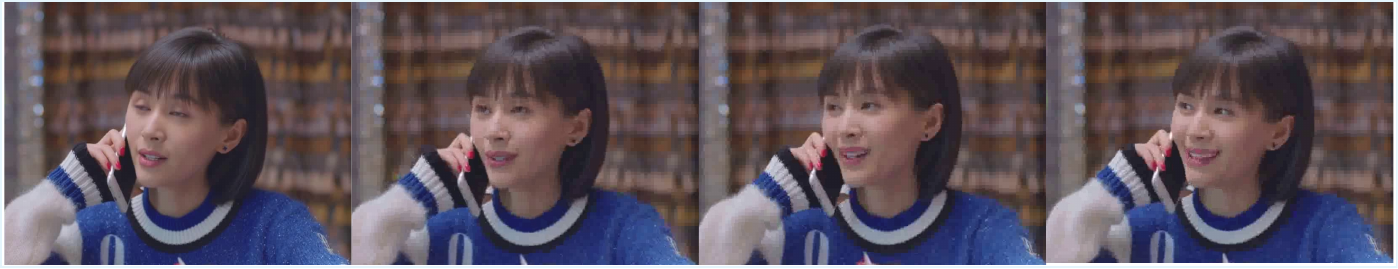}} \\
Subtitle (En) & {If you don't come up with a flawless lie, I won't help you with this.} \\
\midrule
Ground Truth & {happy}  \\
\midrule
AffectGPT-R1 & {\texttt{<think>}In the text, the caption reads: ``I won't help you unless you come up with a seamless lie." This sentence could be a demand or response from a woman over the phone. Given the audio clues that describe an individual's voice with high pitch and a rapid pace, along with a sense of urgency and frustration, we can infer that the tone might be tense or anxious. Therefore, this sentence might be the woman expressing her concerns or demands over the phone, which aligns with her overall demeanor of seriousness and concern.\texttt{</think>}

\texttt{<answer>}tense, anxious, serious, concerned\texttt{</answer>}}  \\
\midrule
\ourmethod & {\texttt{<think>}Based on the facial expressions, body movements, and tone of the individual described in the video clues, as well as the description of the individual's voice in the audio clues, we can infer that the individual is experiencing a positive emotional state, possibly happiness or excitement. The facial expressions and body movements in the video clues indicate that the individual is engaged in a pleasant conversation and is comfortable in their environment, while the audio clues describe the individual's voice as clear and bright, indicating that she is experiencing positive emotions such as happiness or excitement. Therefore, the phrase ``If you don't come up with a flawless lie, I won't help you with this" in the subtitles may be a form of teasing or joking expression, which aligns with the overall positive emotions displayed by the individual.\texttt{</think>}

\texttt{<answer>}positive, happiness, excitement\texttt{</answer>}}  \\
\midrule
\end{tabular}
}
\label{tab:basic_sample1}
\end{minipage}
\end{table*}

\begin{table*}[ht]
\begin{minipage}{0.98\textwidth}
\caption{Second example of basic emotion reasoning comparing AffectGPT-R1 with \ourmethod.}
\centering  
\scalebox{0.80}{
    \begin{tabular}{l|p{16cm} }
\toprule
\multicolumn{2}{l}{\bf Example of Basic Emotion Reasoning} \\
\midrule
& {\includegraphics[height=3cm]{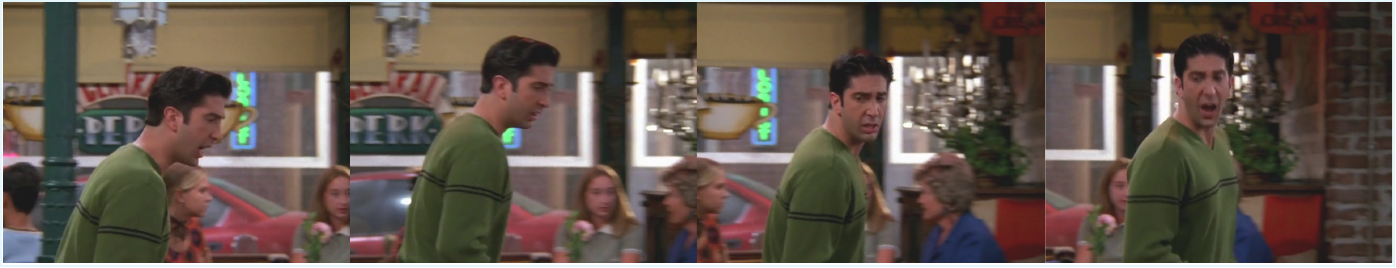}} \\
Subtitle (En) & {Fine!} \\
\midrule
Ground Truth & {sadness}  \\
\midrule
AffectGPT-R1 & {\texttt{<think>}In the text, the caption content: ``Gee!" This phrase could be a man's evaluation or reaction to a woman or a certain situation. Given the video hints of the man smiling and laughing, as well as the woman laughing along with him, it can be inferred that this phrase might be the man's evaluation of a woman or a certain situation, conveying his happiness and satisfaction. At the same time, based on the audio clues describing the man's voice filled with tension and excitement, it can be speculated that this phrase might be the man's evaluation of a situation that excites him or that he is looking forward to.\texttt{</think>}

\texttt{<answer>}happy, satisfied, excited\texttt{</answer>}}  \\
\midrule
\ourmethod & {\texttt{<think>}Based on the facial expressions, body language, and surrounding environment of the man in the video clues, as well as the audio clues indicating of laughter and possibly negative emotions, we can infer that this individual might be experiencing a negative emotional state, such as anger, frustration, or annoyance. The man's actions, facial expressions, and body language suggest that he is experiencing a negative emotional state, possibly due to dissatisfaction or annoyance with the current situation or event. The laughter in the audio clues may be a response from others to his emotional state, possibly indicating mockery or teasing. Therefore, this sentence ``Fine!" might be the man expressing his dissatisfaction or annoyance, possibly with a tone of anger or frustration.\texttt{</think>}

\texttt{<answer>}anger, frustration, annoyance\texttt{</answer>} } \\
\midrule
\end{tabular}
}
\label{tab:basic_sample2}
\end{minipage}
\end{table*}

\begin{table*}[ht]
\begin{minipage}{0.98\textwidth}
\caption{First example of sentiment analysis comparing AffectGPT-R1 with \ourmethod.}
\centering  
\scalebox{0.80}{
    \begin{tabular}{l|p{16cm} }
\toprule
\multicolumn{2}{l}{\bf Example of Sentiment Analysis} \\
\midrule
& {\includegraphics[height=3cm]{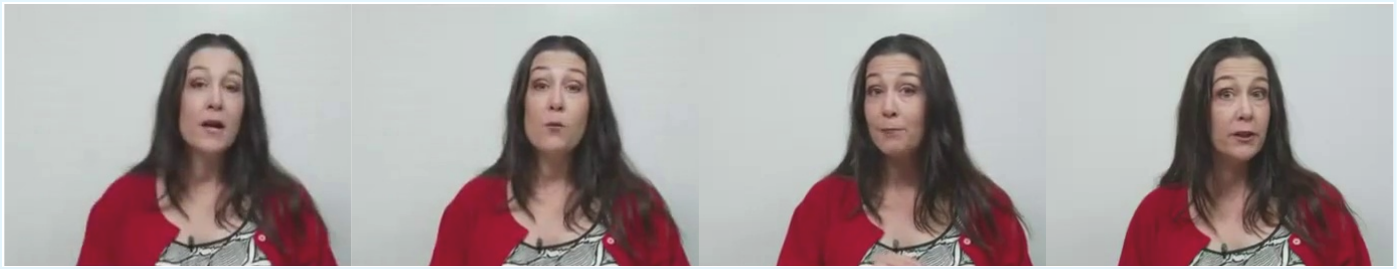}} \\
Subtitle (En) & {Okay, what happens at this point after we've taken this brief walk down memory lane, is the presentation of the gift.} \\
\midrule
Ground Truth & {positive}  \\
\midrule
AffectGPT-R1 & {\texttt{<think>}In the text, the subtitle reads: ``After this brief stroll down memory lane, it's time to present the gift." This sentence might be spoken by a woman in a video. Based on the audio cues described, which include a personal voice trembling, a quick pace, and a sense of urgency, along with the woman's expression of surprise or a strong emotional response in the video clues, we can infer that the tone of this sentence might be one of tension, excitement, or anxiety. The woman may be introducing an important or unexpected event, or about to present a gift. Therefore, this sentence could be the woman's expression of excitement and tension, which alignss with her overall emotional state.\texttt{</think>}

\texttt{<answer>}excited, tense, anxious\texttt{</answer>}

\texttt{<sentiment>}negative\texttt{</sentiment>}
}  \\
\midrule
\ourmethod & {\texttt{<think>}Based on the facial expressions, body language, and overall demeanor of the woman in the video clues, as well as the description of the speaker's tone, intonation, and pace in the audio clues, we can infer that the woman may be experiencing a positive emotional state, such as happiness or excitement. The woman's facial expressions and body language suggest that she is feeling comfortable and relaxed in this environment, while the speaker's tone and intonation indicate that they are excited about the presentation of the gift and eager to share it with the audience. Therefore, based on the video and audio clues, we can infer that the woman is excited and happy about the presentation of the gift.\texttt{</think>}

\texttt{<answer>}excited, happy\texttt{</answer>}

\texttt{<sentiment>}positive\texttt{</sentiment>}
}  \\
\midrule
\end{tabular}
}
\label{tab:sentiment_sample1}
\end{minipage}
\end{table*}

\begin{table*}[ht]
\begin{minipage}{0.98\textwidth}
\caption{Second example of sentiment analysis comparing AffectGPT-R1 with \ourmethod.}
\centering  
\scalebox{0.80}{
    \begin{tabular}{l|p{16cm} }
\toprule
\multicolumn{2}{l}{\bf Example of Sentiment Analysis} \\
\midrule
& {\includegraphics[height=3cm]{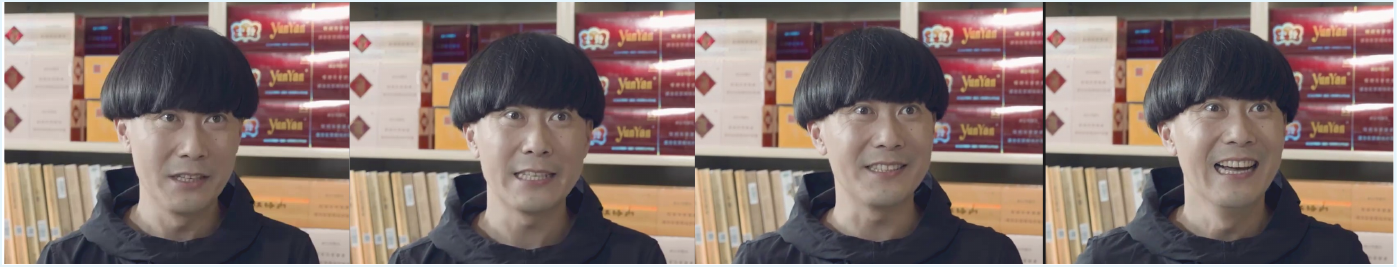}} \\
Subtitle (En) & {Don't worry, Aunt Wang, I'm very patient.} \\
\midrule
Ground Truth & {positive}  \\
\midrule
AffectGPT-R1 & {\texttt{<think>}In the text, the caption reads: ``Don't worry, Auntie Wang, I'm very patient." This sentence might be a man's comfort or assurance to Auntie Wang. Given the audio clues describing an individual with a clear, confident voice that also exhibits signs of anxiety and nervousness, along with the video clues showing the man in a relaxed posture with a neutral facial expression, we can infer that this statement might be the man's response to Auntie Wang's concerns or worry. He is trying to reassure her by expressing his patience and calmness, which alignss with his overall demeanor of relaxation and contentment.\texttt{</think>}

\texttt{<answer>}patient, calm, reassured, content, anxious, nervous\texttt{</answer>}

\texttt{<sentiment>}neutral\texttt{</sentiment>}
}  \\
\midrule
\ourmethod & {\texttt{<think>}In the text, the subtitle says, ``Don't worry, Aunt Wang, I am very patient." Based on the description of the individual's high pitch, excited tone, and the man's smile and laughter in the video clues, we can infer that this sentence is likely spoken by the individual in a positive emotional state, such as happiness or excitement. The individual may be reassuringing Aunt Wang that they are very patient, expressing their own positive emotions. This sentence may be the individual responding to Aunt Wang's concerns or doubts, indicating that they will handle the situation with care and understanding. Overall, this sentence aligns with the positive emotions displayed by the individual and the joyful atmosphere of the scene.\texttt{</think>}

\texttt{<answer>}positive, happiness, excitement, care, understanding\texttt{</answer>}

\texttt{<sentiment>}positive\texttt{</sentiment>}
}  \\
\midrule
\end{tabular}
}
\label{tab:sentiment_sample2}
\end{minipage}
\end{table*}


\end{document}